\documentclass[a4paper,fleqn]{cas-sc}

\usepackage[authoryear,longnamesfirst]{natbib}

\usepackage{graphicx}%
\usepackage{multirow}%
\usepackage{amsmath,amssymb,amsfonts}%
\usepackage{amsthm}%
\usepackage{mathrsfs}%
\usepackage[title]{appendix}%
\usepackage{xcolor}%
\usepackage{textcomp}%
\usepackage{manyfoot}%
\usepackage{booktabs}%
\usepackage{algorithm}%
\usepackage{algorithmicx}%
\usepackage{algpseudocode}%
\usepackage{listings}%

\usepackage{tikz}
\usetikzlibrary{positioning,arrows.meta,shapes.geometric}
\usepackage{hyperref}
\usepackage{cleveref}
\usepackage{longtable}
\usepackage{booktabs}
\usepackage{array}
\usepackage{tabularx}
\usepackage[table]{xcolor}
\usepackage{subcaption}
\usepackage{siunitx}
\usepackage{makecell}
\usepackage{todonotes}

\theoremstyle{thmstyleone}%
\newtheorem{theorem}{Theorem}
\newtheorem{proposition}[theorem]{Proposition}%

\theoremstyle{thmstyletwo}%
\newtheorem{remark}{Remark}%
\newtheorem{corollary}{Corollary}%

\theoremstyle{thmstylethree}%
\newtheorem{definition}{Definition}%

\def\tsc#1{\csdef{#1}{\textsc{\lowercase{#1}}\xspace}}
\tsc{WGM}
\tsc{QE}

\begin{document}
\let\WriteBookmarks\relax
\def\floatpagepagefraction{1}
\def\textpagefraction{.001}

\shorttitle{Physics-Conforming Latent Twins}

\shortauthors{Chung et al.}

\title [mode = title]{Physics-Conforming Latent Twins}



%

\author[1]{Matthias Chung}[orcid=0000-0001-7822-4539,]

\cormark[1]


\ead{matthias.chung@emory.edu}

\ead[url]{}

\credit{Conceptualization, Methodology, Formal analysis, Investigation, Software, Validation, Writing – original draft, Writing – review and editing, Supervision, Project administration}

\author[1]{Yutong Bu}[orcid=0009-0008-5511-7341]

\ead{audrey.bu@emory.edu}

\ead[url]{}

\credit{Methodology, Software, Validation, Investigation, Visualization, Writing – original draft, Writing – review and editing}

\affiliation[1]{organization={Department of Mathematics, Emory University},
            addressline={400 Dowman Drive},
            city={Atlanta},
            postcode={30322},
            state={GA},
            country={USA}}




\author[2]{Deepanshu Verma}[orcid=0000-0002-0904-5470]


\ead{dverma@clemson.edu}

\ead[url]{}

\credit{Methodology, Formal analysis, Software, Validation, Investigation, Writing – original draft, Writing – review and editing}

\affiliation[2]{organization={School of Mathematical and Statistical Sciences, Clemson University},
            addressline={220 Parkway Drive},
            city={Clemson},
            postcode={29634},
            state={SC},
            country={USA}}

\cortext[cor1]{Corresponding author}

\begin{abstract}
Surrogate models are central to scientific machine learning, where they enable fast prediction, simulation, inference, and control for complex physical systems. For time-dependent problems, however, accurate interpolation of training trajectories is not sufficient: reliable surrogates should also respect the conservation laws, invariants, admissibility conditions, and dissipative structures that give those trajectories physical meaning. We introduce Physics-conforming Latent Twins, a framework for learning latent surrogate solution operators whose dynamics satisfy selected physical principles by design. The method builds on the Latent Twin formulation by jointly learning an encoder, a decoder, and a latent flow map between arbitrary time-indexed states, while constraining the latent dynamics to preserve or dissipate prescribed structural quantities. We develop a constraint-transfer viewpoint that separates pullback compatibility from latent conformity, connecting physical structure in the original state space with enforceable constraints in latent space, and prove structure-preservation bounds showing how latent enforcement improves control of physical defects after decoding. We also derive algebraic conditions for latent flow maps that preserve linear and quadratic invariants or enforce dissipative inequalities. Numerical experiments on representative ODE and PDE benchmarks demonstrate improved constraint satisfaction, structural fidelity, and qualitative long-time behavior while maintaining accurate surrogate prediction.
\end{abstract}


\begin{keywords}
Scientific machine learning \sep structure-preserving surrogate modeling \sep reduced-order modeling \sep latent dynamics \sep operator learning \sep physics-informed learning \sep autoencoders \sep dynamical systems
\end{keywords}

\maketitle

\section{Introduction}\label{sec:introduction}

The promise of scientific machine learning is not merely to accelerate simulation, but to learn models that retain the organizing structure of the physical world. Time-dependent physical systems are governed by more than input-output correlations: they obey conservation laws, preserve invariants, remain within admissible states, and often reflect underlying symmetries or geometric structure. When learned surrogates ignore these principles, they may interpolate data well while losing the properties that make scientific predictions reliable under extrapolation, long-time simulations, and other downstream use. This motivates a central question: how can we learn surrogate solution operators whose dynamics are accurate, efficient, and physically meaningful by design?

In this work, we emphasize a structure-aware viewpoint on scientific machine learning: conservation, symmetry, invariance, admissibility, and dissipation are not merely consequences of a particular system of governing equations; they are underlying structural principles that delimit the space of physically admissible evolutions. From this perspective, governing equations are expressions of deeper organizing principles, rather than the sole source of physical structure. We therefore seek surrogate models that do more than imitate simulated trajectories: their learned dynamics are restricted by the principles that give those trajectories physical meaning.

This viewpoint is aligned with a recurring theme in applied mathematics and physics: an appropriate representation can reveal structure that is hidden in the original variables. Noether's theorem is a classical example, identifying conservation laws as consequences of continuous symmetries of the action rather than as isolated properties of particular equations \cite{noether1918invarianten,noether1983invariante}. A surrogate model that encodes this structural viewpoint directly, constraining its latent dynamics to respect the organizing principles of the physical system, may be called physics-conforming. \emph{Physics-conforming Latent Twins} pursue this goal by using the latent space not merely for compression, but as the space in which selected physical structures are encoded and propagated by design.

The \emph{Latent Twins} framework was introduced in \cite{chung2026latent} and jointly learns (i) an encoder--decoder mapping high-dimensional states to a latent representation, and (ii) a latent solution operator that propagates states between arbitrary times $s,t$. Building upon this foundation, we introduce structured latent flow maps that satisfy selected physical principles, including conservation laws, quadratic invariants, and dissipative inequalities, thereby enabling physics-conforming surrogate dynamics. This places our approach within the operator learning viewpoint: rather than learning one-step updates or vector fields, we learn mappings of the form $u(s) \mapsto u(t)$. The latent space is learned jointly with the dynamics so that the resulting representation admits surrogate evolution satisfying the prescribed physical laws.

Among the many surrogate modeling approaches developed in scientific machine learning, reduced-order models, Koopman methods, neural operators, physics-informed networks, and latent representation learning, reviewed in \Cref{sec:surrogate-positioning}, approaches differ in the mathematical object they learn and in how physical structure is imposed. Within this landscape, Physics-conforming Latent Twins are positioned as latent solution-operator models: the central object is a learned latent flow map between time-indexed states, rather than a projected differential equation, an identified vector field, or a residual-constrained solution field. Rather than treating the latent space as a passive compression of snapshots, we use the learned representation as the space in which the surrogate evolution is organized and constrained — physical structure is not an auxiliary diagnostic imposed after prediction, but part of what the learned solution operator is designed to encode and preserve.

The main contributions of this work are to develop, analyze, and demonstrate a framework for Physics-conforming Latent Twins. We formulate Latent Twins as latent surrogate solution operators whose latent flow maps can be constrained directly, rather than relying only on penalties or diagnostics in the physical variables. To connect physical constraints with latent dynamics, we introduce a constraint-transfer viewpoint that relates structure in the state space to compatible structure in the latent space. This viewpoint separates two mechanisms that are often conflated: identifying a latent representation of a physical quantity, and enforcing its preservation or dissipation by the latent dynamics. This leads to structural approximation bounds showing that a posteriori violations of prescribed physical constraints are controlled by the Latent Twins approximation error, with sharper estimates when compatible constraints are encoded and enforced directly in latent space. We then specialize the framework to linear latent flow maps, deriving algebraic conditions for preserving linear invariants, quadratic energy structures, and dissipative inequalities. Finally, we demonstrate the framework through ODE and PDE experiments that test reconstruction, latent evolution, pullback compatibility, constraint preservation, and long-time behavior, including stable linear systems, nonlinear dynamics with conservation or energy structure, dissipative gradient flows, and PDE benchmarks with conservative and dissipative structure.

The remainder of the paper is organized as follows. In \Cref{sec:background}, we position Physics-conforming Latent Twins relative to related surrogate modeling approaches, formalize the Latent Twin solution-operator framework, and introduce structural functionals for conservation, admissibility, and dissipation; these topics are developed in \Cref{sec:surrogate-positioning,sec:latent-twins,sec:physicsconf}, respectively. In \Cref{sec:method}, we derive approximation and structure-preservation results, including a posteriori control of structural defects in \Cref{sec:posteriorConform} and a priori bounds in \Cref{sec:aprioriConform} that sharpen when constraints are enforced in latent space. We also introduce pullbacks and physics-conforming latent flows and discuss algebraic realizations for linear invariants, quadratic invariants, and dissipative structures in \Cref{sec:pclflow}. Numerical experiments in \Cref{sec:numerics} demonstrate the proposed framework on ODE benchmarks and PDE examples, discussed in \Cref{sec:ODEs,sec:PDEs}. We conclude in \Cref{sec:conclusion} by discussing limitations and future directions.

\section{Background}\label{sec:background}

This section establishes the background needed to formulate Physics-conforming Latent Twins as structure-aware surrogate solution operators. We first situate the framework among related approaches in scientific machine learning, emphasizing the distinction between learning projected equations, residual-constrained solutions, vector fields, and latent solution maps. We then recall the Latent Twin formulation and fix notation for the encoder, decoder, and latent flow map. Finally, we introduce structural functionals to represent conservation, admissibility, and dissipation within a common mathematical framework. These ingredients set up the central question addressed in the remainder of the paper: how physical structure in the state space can be represented, enforced, and controlled through constraints on the latent dynamics.

\subsection{Positioning among surrogate modeling approaches} \label{sec:surrogate-positioning}
Scientific machine learning has produced a broad range of surrogate modeling approaches for time-dependent physical systems, and a comprehensive review is beyond the scope of this work. Rather than attempting an exhaustive survey, we focus on methodological directions most closely connected to the Latent Twin framework: reduced-order modeling, Koopman-based methods, neural operators, physics-informed neural networks, and latent representation learning for scientific dynamics. These approaches share the goal of replacing expensive time-dependent simulations by efficient surrogate models, but they differ in the mathematical object they choose to learn.

Classical reduced-order modeling starts from the observation that many scientific systems evolve on or near low-dimensional solution manifolds. Projection-based methods such as proper orthogonal decomposition and reduced-basis methods therefore construct a low-dimensional trial space and approximate the governing equations within that space \cite{benner2015survey,hesthaven2016certified}. This perspective has been foundational for scientific computing: it shows that efficient surrogates are possible when the dominant dynamical degrees of freedom can be identified.

A second line of work is based on the Koopman viewpoint. Instead of reducing the state directly, Koopman methods lift the dynamics to a space of observables in which nonlinear evolution can be represented linearly, at least formally \cite{koopman1931hamiltonian,mezic2005spectral}. Dynamic mode decomposition and its variants provide practical finite-dimensional approximations of this idea from data and may even add conservation laws, \cite{schmid2010dynamic,kutz2016dynamic,baddoo2023physics}. More recently, Koopman autoencoders use neural networks to learn coordinates in which the latent evolution is approximately linear \cite{lusch2018deep,takeishi2017learning}. These methods have moved the field forward by connecting representation learning with interpretable linear latent dynamics, but they are typically organized around evolution in learned coordinates rather than around physics-conforming solution operators.

A third related direction is sparse model discovery and system identification, most prominently the sparse identification of nonlinear dynamics (SINDy) framework \cite{brunton2016discovering}. These methods seek explicit governing equations by selecting a small number of active terms from a prescribed library of candidate functions, with extensions to PDEs, weak formulations, implicit dynamics, control inputs, and constrained or physics-informed variants \cite{rudy2017data,schaeffer2017learning,kaheman2020sindy,reinbold2020using}. Physics-conforming Latent Twins share the emphasis on interpretable and structured dynamics, but differ in the mathematical object being learned: rather than identifying an explicit vector field or differential equation, we learn a latent solution operator whose admissible evolution is constrained by structural functionals.

A fourth major direction is physics-informed learning. Physics-informed neural networks incorporate known differential equations directly into the training objective, typically through residual losses evaluated at collocation points \cite{raissi2019physics,baez2024guaranteeing, patel2022thermodynamically}. These approaches have reshaped the way data and governing equations are combined, especially in settings where observations are sparse or boundary and initial conditions are known. At the same time, the physical structure is usually imposed as a residual penalty in the ambient variables, rather than through a structured latent evolution operator.

A related line of work embeds physical structure directly into the neural network architecture rather than as a penalty. Hamiltonian Neural Networks \cite{greydanus2019hamiltonian} parametrize the vector field by learning a scalar Hamiltonian and deriving dynamics through Hamilton's equations, thereby guaranteeing exact energy conservation by construction. This inductive bias leads to faster training and better generalization on conservative systems such as the two-body problem and the pendulum, and produces models that are perfectly time-reversible. While Hamiltonian Neural Networks operate at the level of the vector field in the original state space, Physics-conforming Latent Twins pursue a complementary goal: enforcing analogous conservative structure on the latent solution operator rather than on the learned dynamics directly.

Neural operators take another step by learning mappings between function spaces. Architectures such as DeepONet and Fourier neural operators learn input-output maps associated with parametric differential equations or time-dependent solution maps \cite{lu2021learning,li2020fourier,kovachki2023neural}. This operator-learning viewpoint has been highly influential because it shifts attention from learning individual solutions to learning solution maps that generalize across families of inputs. Our work adopts this solution-map perspective, but asks how such maps can be represented in latent coordinates where physical structure can be imposed directly on the learned evolution.

Finally, recent representation-learning approaches combine latent variables with learned dynamics, operator maps, or coordinate-based functional representations. These methods ask whether high-dimensional scientific states can be represented in compact coordinates where prediction becomes easier. Latent space dynamics identification methods, such as LaSDI and its variants, use autoencoders to compress PDE solution data and then learn reduced dynamics in the latent space \cite{fries2022lasdi,he2023glasdi,tran2024weak,park2024tlasdi}. Related neural-field approaches use coordinate-based decoders or implicit neural representations to represent states as continuous functions, enabling prediction across irregular grids, complex geometries, and varying resolutions \cite{chen2022crom,serrano2023operator,yin2022continuous}. These works have substantially advanced latent and functional representations for scientific dynamics. Latent Twins share this broad goal, but differ in their organizing principle: the central object is a latent solution operator, learned jointly with the encoder and decoder, and designed to support physics-conforming structure between arbitrary states and times.

\subsection{Latent Twins}
\label{sec:latent-twins}

We now recall the solution-operator formulation of Latent Twins and introduce the notation used throughout the analysis.

\begin{definition}[Latent Twin]\label{def:latenttwin}
Let $\Phi$ denote the solution operator of a time-evolutionary system on a normed vector space $\mathcal U$,
\[
    \Phi(t,s,\cdot):\mathcal U\to \mathcal U,
\]
where $\Phi(t,s,u_s)$ maps a state $u_s\in\mathcal U$ at time $s$ to the corresponding state at time $t$. A \emph{Latent Twin} of $\Phi$ is a surrogate solution operator of the form
\begin{equation}\label{eq:latenttwinfinite}
    \Psi(t,s,u_s) := d\bigl(m_{s\to t}(e(u_s))\bigr),
\end{equation}
where $Z$ is a latent space, $e:\mathcal U\to Z$ is an encoder, $m_{s\to t}:Z\to Z$ is a latent flow map, and $d:Z\to\mathcal U$ is a decoder. The maps $e$, $m_{s\to t}$, and $d$ are trained so that
$$
    d\circ e \approx \mathrm{id}_{\mathcal U}
$$
on the relevant subset of $\mathcal U$, and
$$
    \Psi(t,s,u_s)\approx \Phi(t,s,u_s)
$$
for the time pairs and states of interest.
\end{definition}

\begin{remark}[Notation, time dependence, and state spaces]
In practice, the encoder, latent flow map, and decoder depend on trainable parameters $\theta$,
$$
e_\theta:\mathcal U\to Z,\qquad
m_{\theta,s\to t}:Z\to Z,\qquad
d_\theta:Z\to\mathcal U.
$$
For readability, we suppress the dependence on $\theta$ whenever no ambiguity arises. The notation is intentionally abstract in both time and state space. The indices $s$ and $t$ may denote continuous times, discrete time levels, or more general ordered parameters indexing system states. Thus, $m_{s\to t}$ should be understood as a latent map between two indexed states, and no temporal ordering (e.g., $s<t$) is imposed at the level of notation. For irreversible or strongly directional systems, however, approximating inverse-time maps may be ill-posed or unstable.

The state space $\mathcal U$ is likewise kept abstract. It may be finite-dimensional, as in a fully discrete dynamical system or ordinary differential equation, or infinite-dimensional, as in a PDE posed on a function space. In the latter case, practical encoder and decoder architectures are typically implemented after discretization or observation; the approximation results below make this step explicit through projection and reconstruction maps. Alternatively, one may regard the encoder and decoder themselves as operators between function spaces, as in operator-learning or neural-field formulations \cite{lu2021learning,li2020fourier,sitzmann2020implicit}.

Finally, although the latent space is often chosen to be lower-dimensional than the state space, $\dim(Z)\ll \dim(\mathcal U)$ is not required. Larger latent spaces may be useful for auxiliary variables, structural constraints, or richer latent dynamics \cite{chung2025sparse}.
\end{remark}

Having fixed the solution-operator notation, we now describe the structural properties that we aim to transfer from the physical dynamics to the latent flow map.

\subsection{Physics-conforming properties and approximation setting} \label{sec:physicsconf}
Time-evolutionary systems often preserve, restrict, or dissipate quantities of physical relevance. Examples include conserved mass or energy, dissipative inequalities, admissibility conditions such as positivity or boundedness, and symmetry constraints \cite{murray2007mathematical,Khalil2002,HairerWanner1996}. These properties are not merely side conditions to be checked after prediction; they are organizing principles that shape the class of admissible dynamics. For instance, in the SIR model, the relation $S+I+R=\mathrm{const.}$ identifies an invariant manifold on which the physically meaningful dynamics evolve \cite{murray2007mathematical}. Conversely, the assumption of a closed population is itself a structural modeling principle: it restricts the admissible class of epidemic models to dynamics that preserve the total population, of which the classical SIR equations are one example.

To make this viewpoint precise without committing to a particular governing equation, we encode such principles abstractly through a structural functional $\mathcal C : \mathcal U \to \mathbb R^k$, acting on the state space $\mathcal U$. Along an evolution $u_t$, the quantities $\mathcal C(u_t)$ describe the physical structures to be preserved or controlled over time. For differentiable $\mathcal C$, examples include conserved quantities and dissipative structures satisfying
\begin{equation}\label{eq:constraints}
    \frac{\mathrm d}{\mathrm dt}\mathcal C(u_t)=0,
    \quad \text{or} \quad
    \frac{\mathrm d}{\mathrm d t}\mathcal C(u_t)\le 0.
\end{equation}
The equality case represents preserved relations, such as conservation laws or invariant manifolds. The inequality case represents admissibility or monotonicity properties, such as positivity, boundedness, entropy inequalities, or energy dissipation. Thus, as discussed above, $\mathcal C$ is not meant to determine the full dynamics; rather, it records selected structural information that any physically meaningful evolution should satisfy.

For continuous-time dynamics, an invariant $\mathcal C(u_t)=0$ also has the local form
\begin{equation}\label{eq:constraint-chain-rule}
    0
    =
    \frac{\mathrm d}{\mathrm dt}\mathcal C(u_t)
    =
    D\mathcal C(u_t)\big(\mathcal D(u_t)\big),
    \qquad u'=\mathcal D(u).
\end{equation}
Thus, structural functionals constrain the admissible evolution: the vector
field must be tangent to the level sets of $\mathcal C$. Since Latent Twins are
formulated at the level of solution operators, we do not require access to
$\mathcal D$. Instead, we ask whether the surrogate solution operator preserves
the structural relation encoded by $\mathcal C$.

We first treat this question a posteriori. The approximation theorem of
\cite{chung2026latent} gives uniform control of the lifted Latent Twin surrogate
on a compact invariant set. Consequently, any Lipschitz structural functional
evaluated on the surrogate remains close to its value along the exact flow. We
later show that sharper estimates are possible when the structure is imposed
directly in latent space.

\paragraph{Assumption on PDEs and Latent Twin.}
Let $(\mathcal{U},\|\cdot\|_{\mathcal{U}})$ be a Banach space, and let $\Phi(t,s,\cdot)$ and $\Phi_N(t,s,\cdot)$ denote the exact PDE solution operator and corresponding discrete flow on $[T_0,T_{\mathrm{end}}]$, respectively. We make the following assumptions:
\smallskip\noindent
\textbf{(A1) Well-posed flow.}
Assume there exists a compact invariant set $\mathcal{K}\subset \mathcal{U}$ such that
$$
\Phi(t,s,\cdot):\mathcal{K}\to\mathcal{K}
$$
is well defined for all $s,t\in[T_0,T_{\mathrm{end}}]$, and is Lipschitz continuous in the initial condition on $[T_0,T_{\mathrm{end}}]^2\times\mathcal{K}$ with constant $L_{\mathcal{K}}>0$.

\smallskip\noindent
\textbf{(A2) Stable and consistent spatial discretization.}
Let $P_N:\mathcal{U}\to\mathbb{R}^N$ and $R_N:\mathbb{R}^N\to\mathcal{U}$ be uniformly bounded projection and lifting operators satisfying
$$
\varepsilon_{\mathrm{disc}} := \sup_{u\in\mathcal{K}} \|R_N(P_N(u))-u\|_{\mathcal {U}}\to 0 \qquad \text{for } N\to\infty,
$$
and assume the discrete flow $\Phi_N$ is consistent with the exact flow in the sense that for some constant $C_{\mathrm{disc}}>0$,
$$
\sup_{\substack{u\in\mathcal{K}\\ t,s\in[T_0,T_{\mathrm{end}}]}} \|R_N(\Phi_N(t,s,P_N(u))) - \Phi(t,s,u)\|_{\mathcal{U}} \leq C_{\mathrm{disc}} \, \varepsilon_{\mathrm{disc}}.
$$

\smallskip \noindent
\textbf{(A3) Expressiveness of Latent Twin.}
On the compact discrete state set
$$
\mathcal{K}_N:=P_N(\mathcal{K})\subset\mathbb{R}^N,
$$
assume the encoder $e:\mathbb{R}^N\to Z$, decoder $d:Z\to\mathbb{R}^N$, and latent map $m:[T_0,T_{\mathrm{end}}]^2\times Z \to Z$ satisfy:
\begin{enumerate}
    \item $e$ and $d$ are Lipschitz continuous, with decoder Lipschitz constant $L_{d}$;
    \item the reconstruction error obeys
    $$
    \sup_{x\in\mathcal{K}_N} \|d(e(x))-x\|
    \leq \varepsilon_{\mathrm{ae}},
    $$
    where $\varepsilon_{\mathrm{ae}} \geq 0$ denotes the autoencoder reconstruction error;
    \item $m$ approximates the exact discrete latent flow
    $$
    m^\ast_{s\to t}(z):= e\bigl(\Phi_N(t,s,d(z))\bigr)
    $$ uniformly, with
    $$
    \sup_{(t,s,z)\in [T_0,T_{\mathrm{end}}]^2\times Z} \|m_{s\to t}(z)-m^\ast_{s\to t}(z)\| \leq \varepsilon_{\mathrm{map}},
    $$
    where $\varepsilon_{\mathrm{map}} \geq 0$ is the latent map approximation error.
\end{enumerate}

\begin{theorem}[Latent Twin Approximation Bound {\cite[Theorem~2]{chung2026latent}}]
\label{thm:lt-pde}
For $u_s\in\mathcal{K}$, define the lifted Latent Twin surrogate by
$$
\Psi_{N}(t,s,u_s) :=
R_N\Bigl( d\bigl(m_{s\to t} (e(P_N(u_s)))\bigr)\Bigr).
$$
Then, under \textup{(A1)--(A3)}, there exists a constant $C_{\mathrm{lt}}>0$ such that
$$
\sup_{(s,t,u_s)\in[T_0,T_{\mathrm{end}}]^2\times\mathcal{K}}
\left\| \Psi_{N}(t, s,u_s) -\Phi(t,s,u_s) \right\|_{\mathcal U}
\leq
C_{\mathrm{lt}}
\left(
\varepsilon_{\mathrm{disc}}
+
\bigl(1+\mathrm{e}^{L_{\mathcal K}(T_{\mathrm{end}}-T_0)}\bigr)\varepsilon_{\mathrm{ae}}
+
L_{d}\varepsilon_{\mathrm{map}}
\right).
$$
In particular, if
$$
\varepsilon_{\mathrm{disc}}\to 0, \qquad
\varepsilon_{\mathrm{ae}}\to 0, \qquad
\varepsilon_{\mathrm{map}}\to 0, \qquad \text{for } \quad N \to \infty
$$
then
$$
\Psi_{N}\to \Phi
\qquad
\text{uniformly on } [T_0,T_{\mathrm{end}}]^2\times\mathcal{K}.
$$
\end{theorem}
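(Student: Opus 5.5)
The plan is a triangle-inequality decomposition of $\Psi_N(t,s,u_s)-\Phi(t,s,u_s)$ into a discretization part, an autoencoder part, and a latent-map part. Each part will be bounded uniformly over $(s,t,u_s)\in[T_0,T_{\mathrm{end}}]^2\times\mathcal K$. Write $x:=P_N(u_s)\in\mathcal K_N$, $z:=e(x)$ and $y:=\Phi_N(t,s,x)$. Let $C_R$ be the uniform bound on $R_N$, treated as a Lipschitz constant, since $R_N$ is linear and bounded in the intended setting. We then insert the intermediate states $R_N(y)$ and $R_N(d(m^\ast_{s\to t}(z)))$:
\begin{align*}
\|\Psi_N-\Phi\|_{\mathcal U}
&\le \|R_N(y)-\Phi(t,s,u_s)\|_{\mathcal U}
+\|R_N(d(m^\ast_{s\to t}(z)))-R_N(y)\|_{\mathcal U}\\
&\quad+\|R_N(d(m_{s\to t}(z)))-R_N(d(m^\ast_{s\to t}(z)))\|_{\mathcal U}.
\end{align*}

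\textbf{First term.} By (A2) it is bounded directly by $C_{\mathrm{disc}}\varepsilon_{\mathrm{disc}}$.

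\textbf{Third term.} By the decoder Lipschitz bound and (A3)(iii), it is at most $C_R L_d\varepsilon_{\mathrm{map}}$.

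\textbf{Middle term.} This is the main obstacle. By definition $d(m^\ast_{s\to t}(z))=d\bigl(e(\Phi_N(t,s,d(e(x))))\bigr)$, so two separate errors enter: the input is $d(e(x))$ instead of $x$, and an outer $d\circ e$ is applied. I would split it as
\[
\|d e \Phi_N(t,s,d e x)-\Phi_N(t,s,d e x)\|+\|\Phi_N(t,s,d e x)-\Phi_N(t,s,x)\|.
\]

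For the first piece, note that $\mathcal K_N=P_N(\mathcal K)$ is invariant under $\Phi_N$ only approximately. I would therefore either assume $\Phi_N(t,s,\cdot)$ maps $\mathcal K_N$ into (a neighborhood of) $\mathcal K_N$, or absorb the mismatch into discretization-size terms. Then the reconstruction bound gives $\varepsilon_{\mathrm{ae}}$, up to terms of size $\varepsilon_{\mathrm{disc}}$ that are absorbed into $C_{\mathrm{lt}}$.

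For the second piece, we need a Lipschitz bound for $\Phi_N$ in the initial condition with constant of Gronwall type $\mathrm e^{L_{\mathcal K}(T_{\mathrm{end}}-T_0)}$. This is inherited from (A1), where a continuous-time flow with Lipschitz vector field gives $L_{\mathcal K}$-growth $\mathrm e^{L_{\mathcal K}|t-s|}$, together with the consistency in (A2). Concretely, one compares $R_N\Phi_N$ with $\Phi$ at the two nearby initial states, paying $2C_{\mathrm{disc}}\varepsilon_{\mathrm{disc}}$ plus the exact-flow Lipschitz factor times $\varepsilon_{\mathrm{ae}}$. Alternatively, and more cleanly, I would state as part of the setting that the discrete flow is uniformly Lipschitz with this exponential constant. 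Either way the piece is bounded by $\mathrm e^{L_{\mathcal K}(T_{\mathrm{end}}-T_0)}\varepsilon_{\mathrm{ae}}$ plus $O(\varepsilon_{\mathrm{disc}})$. I expect this step to need the most care, because $d(e(x))$ may leave $\mathcal K_N$ and the Lipschitz constant must be available on a neighborhood of it.

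\textbf{Conclusion.} Collecting the three bounds and setting
\[
C_{\mathrm{lt}}:=\max\{C_{\mathrm{disc}}+2C_{\mathrm{disc}}C_R,\;C_R,\;1\},
\]
adjusted as needed, yields the stated estimate uniformly in $(s,t,u_s)$. The convergence claim follows immediately, since the right-hand side tends to zero when all three errors vanish as $N\to\infty$ and $C_{\mathrm{lt}}$ is independent of $N$ because the $P_N,R_N$ bounds are uniform.
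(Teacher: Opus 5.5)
Your decomposition follows the route the paper indicates. You peel off $\|R_N\Phi_N(t,s,P_Nu_s)-\Phi(t,s,u_s)\|_{\mathcal U}\le C_{\mathrm{disc}}\varepsilon_{\mathrm{disc}}$ via (A2), and bound the remainder by $\sup_N\|R_N\|$ times the ODE Latent Twin error for the discrete flow. The form $(1+\mathrm e^{L_{\mathcal K}(T_{\mathrm{end}}-T_0)})\varepsilon_{\mathrm{ae}}+L_d\varepsilon_{\mathrm{map}}$ of that error reflects exactly your three pieces. The trouble is that the two points you flag do not follow from (A1)--(A3), and most of the repairs you offer do not work.

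\textbf{Second piece.} Deriving a Lipschitz bound for $\Phi_N$ from (A1) and (A2) fails. The datum $d(e(P_Nu_s))$ is in general not of the form $P_N(u')$ with $u'\in\mathcal K$, so neither the consistency estimate nor the Lipschitz bound of the exact flow applies there. In other words, (A2) gives consistency without stability. Concretely, take $\mathcal U=\mathbb R^2$, $\Phi(t,s,u)=u$, $\mathcal K=[-1,1]\times\{0\}$, $P_N=R_N=\mathrm{id}$, $\Phi_N(t,s,x)=(x_1,\mathrm e^{\lambda(t-s)}x_2)$, $e=\mathrm{id}$, $d(z)=z+(0,\varepsilon)$ and $m=m^\ast$. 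Then (A1)--(A3) hold with $\varepsilon_{\mathrm{disc}}=\varepsilon_{\mathrm{map}}=0$, $\varepsilon_{\mathrm{ae}}=\varepsilon$ and $L_d=L_{\mathcal K}=1$. Yet $\|\Psi_N(t,s,u_s)-\Phi(t,s,u_s)\|=\varepsilon(1+\mathrm e^{\lambda(t-s)})$ is unbounded in $\lambda$, while the right-hand side is not. Along $\varepsilon\to0$ with $\varepsilon\,\mathrm e^{\lambda(T_{\mathrm{end}}-T_0)}=1$, the convergence claim fails. So your \emph{cleaner} alternative is a necessary extra hypothesis, not a convenience: a Lipschitz bound for $\Phi_N$, uniform in $N$, on a set containing $\mathcal K_N\cup d(e(\mathcal K_N))$.

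\textbf{First piece.} Here none of your fixes closes the gap. The reconstruction bound in (A3) is a supremum over $\mathcal K_N$ only, but you need it at $\Phi_N(t,s,d(e(x)))$. Since $d(e(x))\notin\mathcal K_N$ in general, even exact invariance of $\mathcal K_N$ under $\Phi_N$ does not place this point in $\mathcal K_N$. Its distance to $\mathcal K_N$ is of order (Lipschitz constant of $\Phi_N$)$\cdot\varepsilon_{\mathrm{ae}}$, not of discretization size. Carrying the reconstruction bound across that distance via Lipschitz continuity of $d\circ e$ costs a factor $1+L_eL_d$, with $L_e$ the Lipschitz constant of $e$. But $L_e$ appears nowhere in the target estimate and need not be bounded in $N$.

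This is not a technicality. Keep $\mathcal U,\mathcal K,P_N,R_N$ as above, and take $\Phi=\Phi_N=\mathrm{id}$, $d=\mathrm{id}$, $e(x)=(x_1,g(x_2))$ with $g$ piecewise linear, $g(0)=\varepsilon$, $g(\varepsilon)=1$, and $m=m^\ast$. Then $\varepsilon_{\mathrm{disc}}=\varepsilon_{\mathrm{map}}=0$, $\varepsilon_{\mathrm{ae}}=\varepsilon$, $L_d=1$, and $\mathcal K_N$ is exactly invariant. Yet $\Psi_N(t,s,u_s)-\Phi(t,s,u_s)=(0,1)$ for every $\varepsilon>0$.

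You therefore need a second explicit hypothesis. For instance, require the reconstruction error to be at most $\varepsilon_{\mathrm{ae}}$ on $\{\Phi_N(t,s,d(e(x))): s,t\in[T_0,T_{\mathrm{end}}],\ x\in\mathcal K_N\}$, or on a $\Phi_N$-invariant set containing $\mathcal K_N\cup d(e(\mathcal K_N))$. With both hypotheses, uniform in $N$, your argument closes. Then $C_{\mathrm{lt}}$ depends only on $C_{\mathrm{disc}}$, $\sup_N\|R_N\|$ and the assumed stability constant, and the argument coincides with the paper's. The paper handles these two points by invoking the ODE Latent Twin theorem on the discretized system. That amounts to assuming that theorem's stability and reconstruction hypotheses for $\Phi_N$, which (A1)--(A3) as written do not supply.
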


The proof is provided in \cite{chung2026latent}, using the ODE Latent Twin approximation theorem as an intermediate step on the discretized system. In the finite-dimensional setting, $\Psi_N$ reduces to the latent twin $\Psi$ defined in \Cref{eq:latenttwinfinite}.

\section{Physics-conforming Latent Twins}\label{sec:method}

Physics-conforming Latent Twins build on the solution-operator formulation by incorporating structural constraints into the latent evolution itself. We first show that fully data-driven Latent Twins inherit physical structure a posteriori: structural defects are controlled by the surrogate approximation error; see \Cref{sec:posteriorConform}. We then introduce and prove in \Cref{sec:aprioriConform} that enforcing compatible constraints directly in latent space yields sharper estimates. Finally, \Cref{sec:pclflow} develops tractable latent-flow constructions that realize preservation or dissipation properties by design.

\subsection{A posteriori physics-conforming estimate}\label{sec:posteriorConform}
The approximation theorem above immediately yields a corresponding control of structural defect. If the surrogate trajectory is uniformly close to the true trajectory, then any Lipschitz structural functional evaluated on the surrogate must also remain close to its exact value. This gives a direct and very general \emph{a posteriori} estimate.

For notational clarity, we state the estimates for scalar structural functionals. Vector-valued constraints are handled componentwise, as noted after \Cref{thm:apriori}.

\begin{theorem}[Constraint-consistent Latent Twin Approximation Bound]
\label{thm:constraint-consistent}
Under the assumptions of \Cref{thm:lt-pde}, let $\mathcal{C}:\mathcal{U}\to\mathbb{R}$ be Lipschitz continuous on a set containing the exact and surrogate trajectories, with Lipschitz constant $L_{\mathcal{C}}>0$. Assume that the exact flow satisfies the structural property
$$
\mathcal{C}\bigl(\Phi(t,s,u_s)\bigr)\leq 0
\qquad \text{for all } (t,s,u_s)\in[T_0,T_{\mathrm{end}}]^2\times\mathcal{K}.
$$
Let $[y]_+ := \max(y,0)$. Then the lifted Latent Twin surrogate satisfies
$$
\sup_{(t,s,u_s)\in[T_0,T_{\mathrm{end}}]^2\times\mathcal{K}}
\left\|
\left[
\mathcal{C}\bigl(\Psi_{N}(t,s,u_s)\bigr)
\right]_+
\right\|
\leq
L_{\mathcal C} C_{\mathrm{lt}}
\left(
\varepsilon_{\mathrm{disc}}
+
\bigl(1+\mathrm{e}^{L_{\mathcal K}(T_{\mathrm{end}}-T_0)}\bigr)\varepsilon_{\mathrm{ae}}
+
L_{d}\varepsilon_{\mathrm{map}}
\right).
$$
\end{theorem}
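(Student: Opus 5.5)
The argument reduces to a pointwise estimate followed by \Cref{thm:lt-pde}. Fix $(t,s,u_s)\in[T_0,T_{\mathrm{end}}]^2\times\mathcal K$ and abbreviate $\Psi:=\Psi_N(t,s,u_s)$ and $\Phi^\star:=\Phi(t,s,u_s)$. The first step is the elementary observation that the positive part $y\mapsto[y]_+$ is nondecreasing and $1$-Lipschitz on $\mathbb R$, and vanishes on $(-\infty,0]$. Since the exact flow satisfies $\mathcal C(\Phi^\star)\le 0$, we have $[\mathcal C(\Phi^\star)]_+=0$. Hence
\begin{equation*}
\bigl[\mathcal C(\Psi)\bigr]_+ = \bigl[\mathcal C(\Psi)\bigr]_+ - \bigl[\mathcal C(\Phi^\star)\bigr]_+ \le \bigl|\mathcal C(\Psi)-\mathcal C(\Phi^\star)\bigr|.
\end{equation*}
Alternatively, split into cases. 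If $\mathcal C(\Psi)\le 0$, the left side is zero. Otherwise it equals $\mathcal C(\Psi)\le \mathcal C(\Psi)-\mathcal C(\Phi^\star)$.

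The second step uses the Lipschitz continuity of $\mathcal C$. By hypothesis, both $\Psi$ and $\Phi^\star$ lie in the set on which $\mathcal C$ has Lipschitz constant $L_{\mathcal C}$. Therefore $|\mathcal C(\Psi)-\mathcal C(\Phi^\star)|\le L_{\mathcal C}\,\|\Psi-\Phi^\star\|_{\mathcal U}$. The third step bounds $\|\Psi-\Phi^\star\|_{\mathcal U}$ by the supremum controlled in \Cref{thm:lt-pde}, which is
\begin{equation*}
C_{\mathrm{lt}}\Bigl(\varepsilon_{\mathrm{disc}}+\bigl(1+\mathrm{e}^{L_{\mathcal K}(T_{\mathrm{end}}-T_0)}\bigr)\varepsilon_{\mathrm{ae}}+L_d\varepsilon_{\mathrm{map}}\Bigr).
\end{equation*}
This right-hand side does not depend on $(t,s,u_s)$, so taking the supremum over $[T_0,T_{\mathrm{end}}]^2\times\mathcal K$ gives the claimed inequality. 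The norm $\|\cdot\|$ on the scalar $[\,\cdot\,]_+$ is just the absolute value.

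I do not expect a genuine obstacle; the argument is a direct corollary. The only point requiring care is domain bookkeeping. The Lipschitz estimate for $\mathcal C$ must apply to both points simultaneously, which is exactly why the statement assumes Lipschitz continuity on a set containing the exact and surrogate trajectories rather than on $\mathcal K$ alone: the surrogate output $\Psi_N$ need not lie in $\mathcal K$. I would state this explicitly when invoking the Lipschitz bound. I would also note that the constant $C_{\mathrm{lt}}$ is the same one furnished by \Cref{thm:lt-pde}, so no new constants appear.
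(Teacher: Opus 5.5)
Your proposal is correct and matches the paper's proof: the pointwise bound $[\mathcal C(\Psi_N(t,s,u_s))]_+\le|\mathcal C(\Psi_N(t,s,u_s))-\mathcal C(\Phi(t,s,u_s))|$ follows from $\mathcal C(\Phi(t,s,u_s))\le 0$, then the Lipschitz estimate for $\mathcal C$ and \Cref{thm:lt-pde}, then the supremum. Your remark that $\mathcal C$ must be Lipschitz on a set containing both trajectories, because $\Psi_N$ need not lie in $\mathcal K$, is a worthwhile clarification that the paper leaves implicit.
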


\begin{proof}
Fix $(t,s,u_s)\in[T_0,T_{\mathrm{end}}]^2\times\mathcal{K}$. Since the exact flow satisfies $\mathcal{C}\bigl(\Phi(t,s,u_s)\bigr)\leq 0$, we have
\[
    \left[ \mathcal{C}\bigl(\Psi_{N}(t,s,u_s)\bigr) \right]_+ \leq \left| \mathcal{C}\bigl(\Psi_{N}(t,s,u_s)\bigr) - \mathcal{C}\bigl(\Phi(t,s,u_s)\bigr) \right|.
\]
By Lipschitz continuity of $\mathcal C$,
\[
    \left| \mathcal{C}\bigl(\Psi_{N}(t,s,u_s)\bigr) - \mathcal{C}\bigl(\Phi(t,s,u_s)\bigr) \right| \leq L_{\mathcal C} \left\| \Psi_{N}(t,s,u_s)-\Phi(t,s,u_s) \right\|_{\mathcal U}.
\]
Taking the supremum proves the estimate by using \Cref{thm:lt-pde}.
\end{proof}

\begin{corollary}[Equality-type structural properties]
\label{cor:equality-constraint}
If instead the exact flow satisfies
$$
\mathcal{C}\bigl(\Phi(t,s,u_s)\bigr)=0
\qquad \text{for all } (t,s,u_s)\in[T_0,T_{\mathrm{end}}]^2\times\mathcal{K},
$$
then
$$
\sup_{(t,s,u_s)\in[T_0,T_{\mathrm{end}}]^2\times\mathcal{K}}
\left\|
\mathcal{C}\bigl({\Psi}_{N}(t,s,u_s)\bigr)
\right\|
\leq
2L_{\mathcal{C}}C_{\mathrm{lt}}
\left(
\varepsilon_{\mathrm{disc}}
+
\bigl(1+\mathrm{e}^{L_{\mathcal K}(T_{\mathrm{end}}-T_0)}\bigr)\varepsilon_{\mathrm{ae}}
+
L_{d}\varepsilon_{\mathrm{map}}
\right).
$$
\end{corollary}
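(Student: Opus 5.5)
The plan is to reduce the equality case to \Cref{thm:constraint-consistent} by splitting the equality into two one-sided inequalities. In this way the factor $2$ appears naturally, although a direct argument gives a sharper constant.

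First, observe that $\mathcal C(\Phi(t,s,u_s))=0$ means both $\mathcal C(\Phi)\le 0$ and $(-\mathcal C)(\Phi)\le 0$ hold on $[T_0,T_{\mathrm{end}}]^2\times\mathcal K$. The functional $-\mathcal C$ is Lipschitz with the same constant $L_{\mathcal C}$ on the same set containing the exact and surrogate trajectories. We may therefore apply \Cref{thm:constraint-consistent} twice, once to $\mathcal C$ and once to $-\mathcal C$. Write $E$ for the bracketed error expression of \Cref{thm:lt-pde}. This gives
\[
\sup\,[\mathcal C(\Psi_N)]_+\le L_{\mathcal C}C_{\mathrm{lt}}E,
\qquad
\sup\,[-\mathcal C(\Psi_N)]_+\le L_{\mathcal C}C_{\mathrm{lt}}E .
\]

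Next, use the pointwise identity $|y|=[y]_++[-y]_+$, applied with $y=\mathcal C(\Psi_N(t,s,u_s))$. Summing the two bounds pointwise and then taking the supremum yields $\sup|\mathcal C(\Psi_N)|\le 2L_{\mathcal C}C_{\mathrm{lt}}E$, which is the claimed estimate.

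Alternatively, one can bound $|\mathcal C(\Psi_N)-\mathcal C(\Phi)|\le L_{\mathcal C}\|\Psi_N-\Phi\|_{\mathcal U}$ directly and invoke \Cref{thm:lt-pde}. This gives the bound even without the factor $2$, so the stated estimate follows a fortiori.

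No step is a genuine obstacle. The only point to check is that the Lipschitz hypothesis of \Cref{thm:constraint-consistent} transfers to $-\mathcal C$ on the same set, and this is immediate.
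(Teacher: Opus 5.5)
Your proposal is correct and is essentially the paper's proof: apply \Cref{thm:constraint-consistent} to both $\mathcal C$ and $-\mathcal C$, then combine the two bounds using $|y|=[y]_++[-y]_+$. Your side remark is also correct: the direct Lipschitz bound $|\mathcal C(\Psi_N)-\mathcal C(\Phi)|\le L_{\mathcal C}\|\Psi_N-\Phi\|_{\mathcal U}$, or equally the fact that at most one of $[y]_+$, $[-y]_+$ is nonzero, shows the factor $2$ is not needed.
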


\begin{proof}
Apply \Cref{thm:constraint-consistent} to both $\mathcal{C}$ and $-\mathcal{C}$. The claim follows from the elementary estimate
$$
\|\mathcal{C}(x)\|
\leq
\bigl\|[\mathcal C(x)]_+\bigr\|
+
\bigl\|[-\mathcal C(x)]_+\bigr\|.
$$
\end{proof}

\paragraph{Limitations of the a posteriori estimate.}
The estimates above show that structural defect can be controlled directly from the Latent Twin approximation error. However, this control remains purely \emph{a posteriori}: it is obtained by passing the full trajectory error through the Lipschitz continuity of $\mathcal{C}$, so it inherits the entire approximation structure of the surrogate itself, including discretization, autoencoder, and latent-map errors, as well as the growth factor $\mathrm e^{L_{\mathcal K}(T_{\mathrm{end}}-T_0)}$. Consequently, even exact structure preservation by the latent map would not yet appear in the bound. This motivates a sharper, \emph{a priori} approach in which structural properties are enforced directly in latent space.

\subsection{A priori physics-conforming estimate}\label{sec:aprioriConform}

\paragraph{Latent-space structural enforcement.}
The preceding estimate treats structure only after the surrogate trajectory has been produced. A sharper approach is to represent physical constraints directly in latent coordinates and enforce it at the level of the latent propagator itself. To this end, we introduce a latent constraint functional
\[
\mathcal C_Z : Z \to \mathbb R^k,
\]
intended to encode the same structural property in latent space that $\mathcal C$ encodes in the original state space. If such a latent constraint is preserved exactly by the latent evolution, then structural defect need no longer be routed through the full trajectory approximation error. Instead, it can be controlled directly through the mismatch between $\mathcal C$ and $\mathcal C_Z$ and the error introduced by encoding and decoding.

Throughout, we denote by
\[
\mathcal{Z} := \bigl\{ m_{s\to t}(e(P_N(u_s))) :
u_s\in\mathcal K,\;
s\le t\in[T_0,T_{\mathrm{end}}]
\bigr\}
\subset Z
\]
the set of latent states reachable by the latent map from encoded conditions at times $s$ in $\mathcal K$.

\begin{definition}[Compatible latent constraint]
\label{def:compatible}
Let $\mathcal{C} : \mathcal{U} \to \mathbb{R}^k$ be a constraint functional, $d : Z \to \mathbb{R}^N$ a discrete decoder, and $R_N : \mathbb{R}^N \to \mathcal{U}$ the lifting operator. A latent constraint
$\mathcal{C}_Z : Z \to \mathbb{R}^k$ is $\varepsilon_{\mathcal{C}}$ \emph{-compatible} with $\mathcal{C}$ via $R_N \circ d$ with defect
\[
\varepsilon_{\mathcal{C}}
:=
\sup_{z \in \mathcal{Z}}
\|\mathcal{C}(R_N(d(z))) - \mathcal{C}_Z(z)\|,
\]
provided $\varepsilon_{\mathcal{C}} < \infty$.
\end{definition}

The defect $\varepsilon_{\mathcal C}$ quantifies how faithfully the latent constraint $\mathcal C_Z$ represents the physical constraint after decoding and lifting. In this sense, compatibility is the mathematical bridge between structural properties in state space and their latent counterparts. When $\mathcal C$ is known explicitly, the choice $\mathcal C_Z = \mathcal C \circ R_N \circ d$ yields $\varepsilon_{\mathcal C}=0$ by definition. In practice, however, this pullback is typically unavailable before training, since the decoder is learned, and moreover may be too complicated to preserve directly through explicit constraints on the latent propagator. One therefore prescribes $\mathcal C_Z$ from a tractable family and uses $\varepsilon_{\mathcal C}$ to measure the resulting mismatch, see following discussions in \Cref{sec:pclflow}.

\begin{definition}[Structure-preserving latent flow]
\label{def:structure-preserving}
Let $\mathcal{C}_Z : Z \to \mathbb{R}^k$ be a latent constraint. The latent flow $m$ is \emph{$\mathcal{C}_Z$-preserving} if for all $z \in \mathcal{Z}$ and $s \leq t \in [T_0,T_{\mathrm{end}}]$:
\begin{enumerate}
    \item[(i)] \emph{(inequality constraint)}
    \[
    \mathcal{C}_Z\bigl(m_{s\to t}(z)\bigr) \leq \mathcal{C}_Z(z);
    \]
    \item[(ii)] \emph{(equality constraint)}
    \[
    \mathcal{C}_Z\bigl(m_{s\to t}(z)\bigr) = \mathcal{C}_Z(z),
    \]
    expressing exact invariance of $\mathcal{C}_Z$ along latent trajectories.
\end{enumerate}

\end{definition}

Condition (ii) is the natural invariance requirement for a conservation law: the latent constraint is preserved exactly, mirroring $\mathcal{C}(u(t)) = 0$ in physical space.
Condition (i) is stronger than forward-invariance of the admissible set $\{\mathcal C_Z\leq 0\}$: it requires $\mathcal C_Z$ to be monotonically non-increasing along reachable latent trajectories.

This monotonicity is needed in the proof of \Cref{thm:apriori} to control $[\mathcal C_Z(z_{s\to t})]_+$ even when the encoded initial state $z_s$ is not exactly in the admissible set, which can happen when compatibility or autoencoder errors are nonzero. For dissipative systems such as the heat equation, this condition is physically natural: $\mathcal{C}_Z$ plays the role of a latent energy that the dynamics dissipate.

\begin{theorem}[Structure-preserving Latent Twin Bound]
\label{thm:apriori}
Let $\mathcal{C} : \mathcal{U} \to \mathbb{R}$ be Lipschitz with constant $L_{\mathcal{C}} > 0$, and suppose the true flow satisfies $\mathcal{C}(\Phi(t,s,u_s)) \leq 0$ for all $(t,s, u_s) \in [T_0,T_{\mathrm{end}}]^2 \times \mathcal K$. Let  $\mathcal{C}_Z$ be compatible with $\mathcal{C}$ via $R_N \circ d$ with defect $\varepsilon_{\mathcal{C}}$ in the sense of \Cref{def:compatible}, and suppose the latent map $m$ is $\mathcal{C}_Z$-preserving  in the sense of \Cref{def:structure-preserving}. Then
\[
\sup_{(t,s,u_s)\in[T_0,T_{\mathrm{end}}]^2\times \mathcal K} \Bigl\|\Bigl[ \mathcal{C}\Bigl( \Psi_{N} (t,s,u_s) \Bigr) \Bigr]_+\Bigr\| \le 2\varepsilon_{\mathcal{C}} + L_{\mathcal{C}}\,\varepsilon_{\mathrm{ae}}.
\]
\end{theorem}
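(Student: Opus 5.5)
The plan is a chain of one-sided inequalities that moves the structural defect from the decoded surrogate state at time $t$ back to the encoded initial state at time $s$. Along the way it uses compatibility twice, latent monotonicity once, and the Lipschitz continuity of $\mathcal C$ once. The trajectory approximation error of \Cref{thm:lt-pde} is never invoked.

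Fix $(t,s,u_s)$ with $s\le t$ and $u_s\in\mathcal K$, and set $z_s:=e(P_N(u_s))$ and $z_{s\to t}:=m_{s\to t}(z_s)$. Then $\Psi_N(t,s,u_s)=R_N(d(z_{s\to t}))$. Since $z_{s\to t}\in\mathcal Z$, \Cref{def:compatible} gives
\[
\mathcal C\bigl(\Psi_N(t,s,u_s)\bigr)\le \mathcal C_Z(z_{s\to t})+\varepsilon_{\mathcal C}.
\]
By the inequality case (i) of \Cref{def:structure-preserving}, $\mathcal C_Z(z_{s\to t})\le \mathcal C_Z(z_s)$. Applying compatibility a second time, now at $z_s$, yields $\mathcal C_Z(z_s)\le \mathcal C(R_N(d(z_s)))+\varepsilon_{\mathcal C}$. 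Combining the three steps,
\[
\mathcal C\bigl(\Psi_N(t,s,u_s)\bigr)\le \mathcal C\bigl(R_N(d(e(P_N(u_s))))\bigr)+2\varepsilon_{\mathcal C}.
\]

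Next I compare with the exact state. Taking $t=s$ in the hypothesis and using $\Phi(s,s,u_s)=u_s$ gives $\mathcal C(u_s)\le 0$. Lipschitz continuity of $\mathcal C$ then gives
\[
\mathcal C\bigl(R_N(d(e(P_N(u_s))))\bigr)\le L_{\mathcal C}\,\bigl\|R_N(d(e(P_N(u_s))))-u_s\bigr\|_{\mathcal U}.
\]
The right-hand side is a lifted autoencoder reconstruction error. Since $[\,\cdot\,]_+$ is monotone and $[a+b]_+\le [a]_+ + b$ for $b\ge0$, taking the positive part and then the supremum yields the claimed bound.

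I expect the main obstacle to be the two points where the definitions must be read carefully.

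\emph{First, $z_s\in\mathcal Z$.} Compatibility at $z_s$ and monotonicity starting from $z_s$ both require $z_s$ to be reachable. This holds if $m_{s\to s}=\mathrm{id}$, since $z_s=m_{s\to s}(z_s)\in\mathcal Z$. I would state this consistency property of the latent flow explicitly, or else read $\mathcal Z$ and \Cref{def:structure-preserving} as including the encoded initial states.

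\emph{Second, matching the reconstruction error to $\varepsilon_{\mathrm{ae}}$.} The term $\|R_N d e P_N u_s-u_s\|_{\mathcal U}$ splits as
\[
\|R_N(d(e(P_N u_s))-P_N u_s)\|_{\mathcal U}+\|R_N P_N u_s-u_s\|_{\mathcal U}\le \|R_N\|\,\varepsilon_{\mathrm{ae}}+\varepsilon_{\mathrm{disc}}.
\]
To obtain exactly $L_{\mathcal C}\varepsilon_{\mathrm{ae}}$, I would interpret $\varepsilon_{\mathrm{ae}}$ as the lifted reconstruction error on $\mathcal K$, which is natural in the finite-dimensional case $R_N=P_N=\mathrm{id}$. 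Otherwise the bound picks up these harmless additional terms; I would note this in a remark.

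Finally, the supremum must be understood over $s\le t$, since monotonicity is only assumed forward in time. Under these readings, no exponential growth factor and no $\varepsilon_{\mathrm{map}}$ appear, which is the point of the theorem.
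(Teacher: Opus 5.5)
Your proposal is correct and follows the paper's proof essentially step for step. The chain is the same: compatibility at $z_{s\to t}$, latent monotonicity $\mathcal C_Z(z_{s\to t})\le\mathcal C_Z(z_s)$, compatibility again at $z_s$, and finally $\mathcal C(u_s)\le 0$ together with Lipschitz continuity of $\mathcal C$ applied to $\hat u=R_N(d(z_s))$.

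The three caveats you flag are exactly the points the paper's proof uses tacitly, so your version is, if anything, more careful than the original:
\begin{enumerate}
\item reachability $z_s\in\mathcal Z$, which requires $m_{s\to s}=\mathrm{id}$;
\item reading $\varepsilon_{\mathrm{ae}}$ as a lifted reconstruction error on $\mathcal K$, whereas (A3) defines it as the discrete error on $\mathcal K_N$, which would instead give $\|R_N\|\,\varepsilon_{\mathrm{ae}}+\varepsilon_{\mathrm{disc}}$;
\item restricting the supremum to $s\le t$.
\end{enumerate}
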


\begin{proof}
Let $z_s := e(P_N(u_s))$ denote the encoded state at time $s$ and
$z_{s \to t} := m_{s\to t}(z_s)$ the surrogate latent state propagated to time $t$, so that
$\Psi_{N}(t,s,u_s) = R_N(d(z_{s \to t}))$. Insert
$\mathcal{C}_Z(z_{s \to t})$ to decompose the physical constraint violation:
\begin{align*}
\bigl[\mathcal{C}(R_N(d(z_{s \to t})))\bigr]_+
&\leq
\bigl[\mathcal{C}(R_N(d(z_{s \to t})))
- \mathcal{C}_Z(z_{s \to t})\bigr]_+
+ \bigl[\mathcal{C}_Z(z_{s \to t})\bigr]_+.
\end{align*}
The first term is bounded by $\varepsilon_{\mathcal{C}}$
by \Cref{def:compatible}. For the second, since  $m$ is $\mathcal{C}_Z$-preserving and $s \leq t$,
\[
    \mathcal{C}_Z(z_{s \to t})  \leq \mathcal{C}_Z(z_s).
\]
It remains to bound the initial latent constraint
value. Writing
$\hat{u} := R_N(d(z_s)) \in \mathcal{U}$,
which satisfies
$\|\hat{u} - u_s\|_{\mathcal{U}} \leq
\varepsilon_{\mathrm{ae}}$, and inserting
$\mathcal{C}(u_s) \leq 0$,
\begin{align*}
\mathcal{C}_Z(z_s)
&\leq
\bigl\|\mathcal{C}_Z(z_s)
- \mathcal{C}(R_N(d(z_s)))\bigr\|
+ \bigl\|\mathcal{C}(\hat{u})
- \mathcal{C}(u_s)\bigr\|
+ \mathcal{C}(u_s).
\end{align*}
The first term is bounded by $\varepsilon_{\mathcal{C}}$
by \Cref{def:compatible}, the second by
$L_{\mathcal{C}}\|\hat{u} - u_s\|_{\mathcal{U}}
\leq L_{\mathcal{C}}\,\varepsilon_{\mathrm{ae}}$,
and $\mathcal{C}(u_s) \leq 0$. Taking the positive part, combining, and taking the supremum over
$(t,s,u_s) \in [T_0,T_{\mathrm{end}}]^2 \times \mathcal K$ yields the stated bound.
\end{proof}
\begin{remark}\label{rem:comparison}
\Cref{thm:apriori} is sharper than the a posteriori estimate of \Cref{thm:constraint-consistent} since it avoids routing structural defect through the full trajectory approximation error. The a posteriori estimate first bounds the surrogate trajectory error and then passes this error through the Lipschitz continuity of $\mathcal C$. As a result, it inherits the complete approximation structure of the surrogate, including discretization error, latent-flow error, and the growth factor $\mathrm e^{L_{\mathcal K}(T_{\mathrm{end}}-T_0)}$ from the underlying solution-operator approximation theorem.

By contrast, when a compatible latent constraint is enforced directly by the latent flow, the latent evolution itself contributes no additional structural violation. The remaining defect is controlled only by the compatibility error between the physical constraint $\mathcal C$ and its latent counterpart $\mathcal C_Z$, together with the decoder/autoencoder error. In particular, when $\varepsilon_{\mathcal C}=0$, the bound reduces to
\[
\sup_{(t,s,u_s)\in[T_0,T_{\mathrm{end}}]^2\times \mathcal K}
\Bigl\|\Bigl[
\mathcal{C}\Bigl(
\Psi_{N}(t,s,u_s)
\Bigr)
\Bigr]_+\Bigr\|
\le
L_{\mathcal{C}}\,\varepsilon_{\mathrm{ae}}.
\]
Thus, in the ideal compatible case, only the autoencoding error remains. No additional amplification through the latent evolution appears in the structural bound. This provides a theoretical justification for prescribing and enforcing suitable constraints directly in latent space, rather than relying only on a posteriori control through trajectory approximation.
\end{remark}

\begin{remark}
The estimates are stated for scalar structural functionals to keep the notation and proofs transparent. Vector-valued constraints $\mathcal C=(\mathcal C_1,\ldots,\mathcal C_k)$ are handled componentwise by applying the scalar estimates to each component $\mathcal C_i$ and combining the resulting bounds in the chosen norm.
\end{remark}

Conservation and dissipation laws, expressed as $\mathcal C(u(t))=\mathcal C(u(s))$ or $\mathcal C(u(t))\leq \mathcal C(u(s))$ for $s\leq t$, fit the preceding estimates after recentering relative to the source state.

\subsection{Compatible pullbacks and physics-conforming latent flow} \label{sec:pclflow}

The bound of \Cref{thm:apriori} shows that physics conformity is not a post-processing property of an accurate trajectory surrogate, but a design principle for the latent twin itself. The design of $\mathcal C_Z$ is therefore inseparable from the design of $m_{s\to t}$: $\mathcal C_Z$ should approximate the decoder-induced pullback of the physical quantity $\mathcal C$, while $m_{s\to t}$ should belong to a class that preserves or dissipates $\mathcal C_Z$ by construction or with controlled violation. Thus the admissible latent constraints are determined simultaneously by the geometry of the reconstruction map and by the algebraic or geometric structure of the latent flow.

\begin{figure}
\centering
\includegraphics[width=0.9\linewidth]{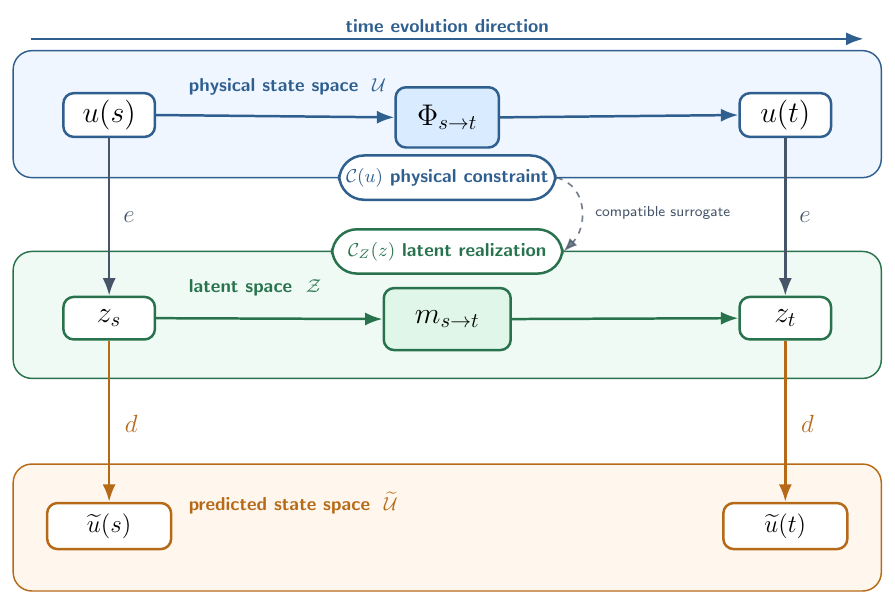}
\caption{Schematic of a Physics-conforming Latent Twin. The encoder $e$ maps a state $u(s)\in\mathcal U$ to a latent state $z_s\in\mathcal Z$, the latent flow $m_{s\to t}$ propagates $z_s$ to $z_t$, and the decoder $d$ maps the latent states back to reconstructed or predicted states. The structural functional $\mathcal C$ encodes a physical property in the state space, for example a conserved quantity or a dissipative functional. A compatible latent constraint $\mathcal C_Z$ realizes this structure in latent space, where it can be preserved or dissipated directly by the latent flow. In this way, physics conformity is imposed on the latent evolution itself.}
\label{fig:latent_twin_schematic}
\end{figure}

\paragraph{Latent constraints.}
The \emph{exact} latent constraint is the decoder-induced \emph{pullback}
\begin{equation}\label{eq:pullback}
    \mathcal C_Z^\star  := \mathcal C\circ R_N\circ d.
\end{equation}
With this choice, the latent constraint agrees exactly with the physical constraint after decoding, so that $\varepsilon_{\mathcal C}=0$ and the compatibility term in the structure-preserving bound vanishes; see \Cref{rem:comparison}. In practice, however, the pullback $\mathcal C_Z^\star$ is not known \emph{a priori}, since the decoder is learned together with the latent representation. Even when $\mathcal C$ is known, its learned pullback $\mathcal C\circ R_N\circ d$ is generally not in a form that the latent flow can preserve exactly. One therefore replaces $\mathcal C_Z^\star$ by a tractable latent functional $\mathcal C_Z$ that should satisfy two complementary requirements.

Note, equivalently, compatibility may also be assessed from the encoder side by requiring $\mathcal C_Z(e(u)) \approx \mathcal C(u)$, for states $u$ on the data manifold. This provides a data-manifold formulation of the same representation requirement. In what follows, we use the decoder-side formulation in \Cref{eq:pullback}, since it is directly tied to the physical states generated by the Latent Twin.

Thus, $\mathcal C_Z$ should approximate the exact pullback \Cref{eq:pullback} in a form that the latent flow can preserve or dissipate. For a conservative structure, this means
\[
    \mathcal C_Z(m_{s\to t}(z)) = \mathcal C_Z(z),
\]
whereas for a dissipative structure one seeks
\[
    \mathcal C_Z(m_{s\to t}(z)) \le \mathcal C_Z(z), \qquad s\le t .
\]
Physics conformity therefore depends on both sides of the construction: how well $\mathcal C_Z$ matches the decoded physical quantity, and how consistently the latent flow propagates $\mathcal C_Z$. This relationship is illustrated in \Cref{fig:latent_twin_schematic}.

In the linear setting, this connection can be made explicit. Linear reconstruction maps preserve the algebraic form of linear and quadratic physical functionals under pullback, while linear latent flows reduce preservation and dissipation to finite-dimensional operator conditions.

\begin{proposition}[Exact pullback realization of linear and quadratic constraints] \label{prop:affine_linear_exact_constraints}
Let $Z$ be a finite-dimensional latent space and let $\mathcal U$ be the physical state space. Suppose that the reconstruction map is linear, $R_N(d(z))=\mathcal D z$, where $\mathcal D:Z\to\mathcal U$ is a bounded linear lifting operator, and let the latent flow be linear, $m_{s\to t}(z)=A_{s,t}z$.

\begin{enumerate}
    \item If $\mathcal C:\mathcal U\to\mathbb R$ is a continuous linear functional, then there exists a vector $a$ such that the decoder-induced pullback is linear, $\mathcal C_Z^\star(z)=a^\top z$. It is preserved by the latent flow for all $z\in Z$ if and only if $A_{s,t}^\top a=a$.

    \item If $\mathcal C(u)=B_{\mathcal U}(u,u)$ with $B_{\mathcal U}:\mathcal U\times\mathcal U\to\mathbb R$ a continuous symmetric bilinear form, then there exists a symmetric matrix $Q_Z$ such that the decoder-induced pullback satisfies $\mathcal C_Z^\star(z)=z^\top Q_Zz$. It is preserved by the latent flow for all $z\in Z$ if and only if $A_{s,t}^\top Q_ZA_{s,t}=Q_Z$, and it is dissipated for $s\le t$ if and only if $A_{s,t}^\top Q_ZA_{s,t}\preceq Q_Z$.
\end{enumerate}
\end{proposition}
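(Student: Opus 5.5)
Fix a basis $\{\zeta_1,\dots,\zeta_r\}$ of the finite-dimensional latent space $Z$ and identify $z\in Z$ with its coordinate vector in $\mathbb R^r$. The whole argument is then a direct computation of the pullback $\mathcal C_Z^\star=\mathcal C\circ R_N\circ d=\mathcal C\circ\mathcal D$ in these coordinates, followed by reading off invariance along the linear flow.

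\emph{Linear case.} Since $\mathcal D$ is linear and bounded and $\mathcal C$ is a continuous linear functional, the composition $\mathcal C\circ\mathcal D:Z\to\mathbb R$ is linear. Set $a_i:=\mathcal C(\mathcal D\zeta_i)$. Then
$$
\mathcal C_Z^\star(z)=\sum_i z_i\,\mathcal C(\mathcal D\zeta_i)=a^\top z .
$$
Preservation along the flow means $a^\top A_{s,t}z=a^\top z$ for all $z\in Z$, that is, $(A_{s,t}^\top a-a)^\top z=0$ for all $z$. Choosing $z$ to be the coordinate vectors shows this holds if and only if $A_{s,t}^\top a=a$. The reverse implication is immediate.

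\emph{Quadratic case.} Define $(Q_Z)_{ij}:=B_{\mathcal U}(\mathcal D\zeta_i,\mathcal D\zeta_j)$. This matrix is symmetric because $B_{\mathcal U}$ is symmetric, and its entries are finite by continuity of $B_{\mathcal U}$ and boundedness of $\mathcal D$. Bilinearity gives
$$
\mathcal C_Z^\star(z)=B_{\mathcal U}\Bigl(\sum_i z_i\mathcal D\zeta_i,\sum_j z_j\mathcal D\zeta_j\Bigr)=z^\top Q_Z z .
$$
Along the flow, $\mathcal C_Z^\star(A_{s,t}z)=z^\top A_{s,t}^\top Q_ZA_{s,t}z$, so preservation for all $z$ is the identity $z^\top M z=0$ for all $z$, where $M:=A_{s,t}^\top Q_ZA_{s,t}-Q_Z$.

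\emph{Main obstacle.} The key step is showing that a vanishing quadratic form forces $M=0$. This requires $M$ to be symmetric. That holds here, since $M^\top=A_{s,t}^\top Q_Z^\top A_{s,t}-Q_Z^\top=M$. Polarization then recovers the bilinear form from the quadratic one:
$$
x^\top M y=\tfrac14\bigl[(x+y)^\top M(x+y)-(x-y)^\top M(x-y)\bigr],
$$
so $z^\top Mz=0$ for all $z$ gives $x^\top My=0$ for all $x,y$, hence $M=0$. The dissipative statement, $z^\top A_{s,t}^\top Q_ZA_{s,t}z\le z^\top Q_Zz$ for all $z$ and $s\le t$, is by definition $M\preceq 0$ for a symmetric $M$, i.e. $A_{s,t}^\top Q_ZA_{s,t}\preceq Q_Z$, so no further argument is needed there.

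As a remark, $Q_Z$ need not be definite, since $\mathcal D$ may have a kernel or $B_{\mathcal U}$ may be indefinite. The argument does not use definiteness anywhere. Only symmetry of $Q_Z$ is needed, and this symmetry is what makes the ``only if'' direction of the equality case valid.
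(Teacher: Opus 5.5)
Your proof is correct and takes essentially the same route as the paper: you write the pullback $\mathcal C\circ\mathcal D$ in coordinates on $Z$ as $a^\top z$ or $z^\top Q_Z z$, then read off invariance or dissipation along $z\mapsto A_{s,t}z$. You are more explicit than the paper in two places, namely in building $a$ and $Q_Z$ from a basis and in justifying the ``only if'' direction of the quadratic case by polarizing the symmetric matrix $A_{s,t}^\top Q_ZA_{s,t}-Q_Z$, a step the paper leaves implicit.
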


\begin{proof}
We prove the two cases separately.

\begin{enumerate}
    \item Since $\mathcal C$ is linear and $\mathcal D$ is linear, the
    decoder-induced pullback $\mathcal C_Z^\star(z)=\mathcal C(\mathcal D z)$ is a linear functional on the finite-dimensional space $Z$. Hence there exists a vector $a$ such that $\mathcal C_Z^\star(z)=a^\top z$. Preservation under $m_{s\to t}(z)=A_{s,t}z$ is equivalent to $a^\top A_{s,t}z=a^\top z$ for all $z\in Z$, which holds if and only if $A_{s,t}^\top a=a$.

    \item Since $B_{\mathcal U}$ is symmetric bilinear and $\mathcal D$ is linear, the decoder-induced pullback $\mathcal C_Z^\star(z)=B_{\mathcal U}(\mathcal D z,\mathcal D z)$ is a quadratic form on the finite-dimensional space $Z$. Therefore, after choosing coordinates on $Z$, there exists a symmetric matrix $Q_Z$ such that $\mathcal C_Z^\star(z)=z^\top Q_Zz$. Preservation is equivalent to $z^\top A_{s,t}^\top Q_ZA_{s,t}z=z^\top Q_Zz$ for all $z\in Z$, which holds if and only if $A_{s,t}^\top Q_ZA_{s,t}=Q_Z$. The dissipative condition follows similarly from $z^\top(A_{s,t}^\top Q_ZA_{s,t}-Q_Z)z\le 0$ for all $z\in Z$.
\end{enumerate}
\end{proof}

\begin{remark}[Scope of the exact realization result]
The preceding proposition is stated in a simplified setting to isolate the basic correspondence between physical structure, latent constraints, and latent-flow invariance.

\begin{enumerate}
    \item The proposition is stated for a single scalar constraint $\mathcal C:\mathcal U\to\mathbb R$ to keep the notation light. In practice, many systems possess several simultaneous structural principles, such as mass and energy conservation in fluid dynamics, mass and positivity constraints in population dynamics, or conservation laws together with entropy inequalities in kinetic equations. As noted above the framework extends directly to vector-valued constraints $\mathcal C:\mathcal U\to\mathbb R^k$ and $\mathcal C_Z:Z\to\mathbb R^k$. The corresponding compatibility defects, operator constraints, and trajectory-level penalties may then be imposed componentwise or through suitable weighted combinations.

    \item Affine decoder offsets and affine or quadratic-affine constraint terms are omitted for notational clarity. They can be incorporated without changing the underlying principle by recentering or by augmenting the latent state with a constant coordinate.

    \item The conditions have a direct invariance interpretation: $A_{s,t}^\top a=a$ means that $a^\top z$ is invariant under the latent flow, while $A_{s,t}^\top Q_ZA_{s,t}=Q_Z$ means that $z^\top Q_Zz$ is invariant. For example, $a=[1,\ldots,1]^\top$ corresponds to preservation of the sum of latent coordinates, analogous to total mass, whereas $Q_Z=I$ corresponds to preservation of the Euclidean latent norm, analogous to a latent energy.

    \item The linear case is stated only for conservation. On a full vector space, a global one-sided inequality for a scalar linear functional reduces to equality; nontrivial monotonicity requires restricting the dynamics to an admissible ordered set, such as a cone or positivity-preserving state space.
\end{enumerate}
\end{remark}

For nonlinear neural-network decoders, the proposition should be read as a design principle rather than a global guarantee. Linear and quadratic physical functionals still motivate affine or quadratic choices of $\mathcal C_Z$, but these choices must be related to the learned decoder. This motivates pullback-matching regularization in addition to latent conformity: the former connects $\mathcal C_Z$ to the decoded physical quantity, while the latter enforces preservation or dissipation of $\mathcal C_Z$ along the latent flow.

\paragraph{Latent flow parameterizations.}
The preceding discussion identifies which latent constraints are natural from the reconstruction side. We now briefly describe how different choices of the latent flow support their enforcement.

A general neural parameterization of the latent flow $m_{s\to t}$ is the most expressive choice. In this case, conservation or dissipation is usually imposed at the trajectory-level, for example by penalizing deviations of $\mathcal C_Z(m_{s\to t}(z_s))$ from $\mathcal C_Z(z_s)$, or by using a one-sided penalty for dissipative constraints. This allows flexible dynamics, but exact preservation is generally not guaranteed away from the sampled training pairs.

A useful reference case for operator-level enforcement is the exponential latent flow
\[
    m_{s\to t}(z)=\exp((t-s)W)z .
\]
This form preserves the semigroup structure, $m_{r\to t}\circ m_{s\to r}=m_{s\to t}$ for $s\le r\le t$, and turns structure preservation into generator-level conditions. For example, conservation of a linear latent quantity $a^\top z$ corresponds to $a^\top W=0$, while conservation or dissipation of a quadratic latent quantity $z^\top Q_Zz$ corresponds to conditions on $W^\top Q_Z+Q_ZW$. These operator-level conditions are summarized in \Cref{tab:latent_constraint_classes}.

The same time-consistency idea can be combined with nonlinear coordinates by conjugation. Let $V:Z\to Z$ be an invertible neural network and let $\varphi_{s\to t}$ be a simpler latent flow satisfying $\varphi_{r\to t}\circ\varphi_{s\to r}=\varphi_{s\to t}$. Then
\[
    m_{s\to t}
    =
    V\circ \varphi_{s\to t}\circ V^{-1}
\]
inherits the same composition property. Constraints may be imposed in the transformed coordinate and pulled back through $V^{-1}$, allowing nonlinear latent dynamics while retaining an analytically controlled inner flow. In particular, a quadratic invariant in the transformed coordinate generally becomes a nonlinear invariant in the original latent coordinate.

The constraint may be enforced at different levels. At the architectural level, the latent flow is parameterized so that the desired property holds identically, for example by choosing a skew-symmetric or dissipative generator. At the operator-regularization level, one penalizes algebraic violations such as $a^\top W$ or $W^\top Q_Z+Q_ZW$. At the trajectory level, one penalizes violations evaluated on sampled latent pairs, such as $\mathcal C_Z(m_{s\to t}(z_s))-\mathcal C_Z(z_s)$. These choices trade expressiveness against exactness: architectural constraints give the strongest guarantees, while trajectory-level penalties are more flexible but only enforce the property on sampled data.

\Cref{tab:latent_constraint_classes} summarizes the preceding design choices by listing representative physical functionals, compatible latent constraint classes, and corresponding operator- or trajectory-level realizations.

\newcolumntype{L}[1]{>{\raggedright\arraybackslash\bfseries}m{#1}}
\newcolumntype{M}[1]{>{\centering\arraybackslash}m{#1}}

\begin{table}[ht!]
\centering
\caption{Representative physical structures and compatible latent-space realizations. Each column displays one constraint type in isolation, although practical models may combine several structures. The examples are written in PDE notation, where physical quantities are naturally expressed as spatial functionals over a domain $\Omega$, but the same linear, quadratic, and monotone structures also arise for ODEs, iterative optimization dynamics, and other evolutionary or sequential systems. ``Lyapunov-type'' refers to a functional whose value is nonincreasing along admissible trajectories. Operator-level realizations are shown for the exponential linear latent flow $m_{s\to t}(z)=\exp((t-s)W)z$; other latent-flow classes may admit analogous structure-preserving parameterizations. In the enstrophy example, $\omega=\nabla\times u$ denotes the vorticity of the velocity field.}
\label{tab:latent_constraint_classes}

\renewcommand{\arraystretch}{1.4}
\small

\definecolor{LTLabel}{RGB}{242,242,242}
\definecolor{LTLinear}{RGB}{232,242,255}
\definecolor{LTQuadratic}{RGB}{235,247,238}
\definecolor{LTMonotone}{RGB}{255,244,230}

\begin{tabular}{%
>{\columncolor{LTLabel}}L{2.0cm}
>{\columncolor{LTLinear}}M{3.5cm}
>{\columncolor{LTQuadratic}}M{4.0cm}
>{\columncolor{LTMonotone}}M{4.5cm}}
\toprule

\rowcolor{LTLabel}
&
\cellcolor{LTLinear}\textbf{Linear}
&
\cellcolor{LTQuadratic}\textbf{Quadratic}
&
\cellcolor{LTMonotone}\textbf{Monotone}
\\

\midrule
Example physical functional
&
\[\mathcal C(u)=\int_\Omega u(\xi)\,\mathrm d\xi\]
&
\makecell[c]{$\displaystyle\mathcal C(u)=\int_\Omega |u(\xi)|^2\,\mathrm d\xi$
\\[1em]
$\displaystyle\mathcal C(u)=
\int_\Omega |\omega(\xi)|^2\,\mathrm d\xi$}
&
\[\mathcal C(u) = \int_\Omega u(\xi)\log u(\xi)\,\mathrm d\xi\]
\\

\addlinespace[0.4em]
Desired property
&
\[\mathcal C(u(t))=\mathcal C(u(s))\]
&
\[\mathcal C(u(t))=\mathcal C(u(s))\]
&
\[\mathcal C(u(t))\le\mathcal C(u(s)) \text{ for } s\le t\]
\\

\addlinespace[0.4em]
Compatible latent constraint
&
\[\mathcal C_Z(z)=a^\top z+b\]
&
\[\mathcal C_Z(z)=z^\top Q_Z z+a^\top z+b\]
&
\makecell[c]{$\mathcal C_Z: Z\to\mathbb{R}$\\Lyapunov-type\\
latent functional
}

\\
\addlinespace[0.4em]
Operator-level realization
&
\[a^\top W = 0\]
&
\[W^\top Q_Z + Q_ZW = 0\]
&
\[W + W^\top \preceq 0\]
\\

\addlinespace[0.4em]
Trajectory-level realization
&
\[\mathcal C_Z(m_{s\to t}(z_s))=\mathcal C_Z(z_s)\]
&
\[\mathcal C_Z(m_{s\to t}(z_s))=\mathcal C_Z(z_s)\]
&
\[\mathcal C_Z(m_{s\to t}(z_s))\le\mathcal C_Z(z_s)\text{ for }s\le t\]
\\

\bottomrule
\end{tabular}

\end{table}

\section{Numerics}\label{sec:numerics}

The numerical experiments are designed to assess how Physics-conforming Latent Twins encode and preserve structure across representative ODE and PDE benchmarks. The goal is not to provide an exhaustive comparison, but to demonstrate how different physical structures can be incorporated into latent solution operators and how such constraints affect prediction accuracy, stability, extrapolation, and qualitative physical consistency.

We first introduce the shared training setup, notation, and loss formulation in \Cref{sec:numsetup}. The ODE experiments in \Cref{sec:ODEs} proceed from linear to nonlinear and dissipative dynamics. In the linear setting, we compare Latent Twins with POD and DMD on stable oscillatory systems. For nonlinear dynamics, we study the multigroup SIR model and the undamped pendulum, which illustrate soft enforcement of a linear conservation law and hard enforcement of a quadratic latent invariant, respectively. The pendulum example also includes a comparison with a physics-informed neural network baseline. We then consider gradient-flow dynamics from optimization as a dissipative benchmark. Finally, \Cref{sec:PDEs} turns to canonical heat and wave equations, which test the framework on high-dimensional spatially discretized PDEs with dissipative and conservative structure.

\subsection{Computational experiment setup}\label{sec:numsetup}

Throughout the experiments, we follow the Latent Twin training setup and learn the latent representation and the time-evolution surrogate jointly. In the numerical sections, we use $x$ for the finite-dimensional state available to the learning algorithm, while $u$ denotes an abstract state or underlying continuous field. Given trajectories of a dynamical system, we form training pairs $\left\{(x_s^j,s^j),(x_t^j,t^j) \right\}_{j=1}^J$, where $x_s^j$ and $x_t^j$ are states from the same trajectory evaluated at two time points $s^j,t^j\in[T_0,T_{\mathrm{end}}]$. In the PDE examples, these states correspond to finite-dimensional projections of the underlying solution, e.g., $x_s^j = R_N(u(s^j))$ and $x_t^j = R_N(u(t^j))$. The time pairs are sampled uniformly from the interval of interest, though other sampling strategies may also be used.

For all ODE Latent Twin examples considered in this section, trajectories are generated using \texttt{scipy.integrate.odeint} with $n_t=20000$ uniformly spaced time steps and tolerances $\mathrm{atol}=10^{-12}$ and $\mathrm{rtol}=10^{-10}$. From the resulting trajectories, we construct $J=2^{15}$ state pairs, of which $0.8J$ are used for training and $0.2J$ for testing. Unless stated otherwise, all models are trained with batch size $256$. For the PDE examples, trajectories are generated using the spectral solver \texttt{Dedalus} \cite{burns2020dedalus} on a periodic domain with randomized smooth initial conditions; details are given in \Cref{sec:PDEs}.

The training objective combines an autoencoder reconstruction loss, a prediction loss, and, depending on the numerical setup, additional conformity losses that encode the desired physical structure by matching the decoder-induced pullback and/or enforcing conformity of the latent evolution. We write the total empirical loss as
\begin{equation}\label{eq:loss}
    \mathcal L_{\mathrm{total}}(\theta)
        = \lambda_{\mathrm{auto}}\mathcal L_{\mathrm{auto}}(\theta)
        + \lambda_{\mathrm{pred}}\mathcal L_{\mathrm{pred}}(\theta)
        + \mathcal L_{\mathrm{conf}}(\theta) ,
\end{equation}
where the first two terms are, in our numerical experiments, chosen as squared $\ell^2$ losses,
\[
    \mathcal L_{\mathrm{auto}}(\theta)
        = \tfrac{1}{2J}\sum_{j=1}^J \left\|     d_\theta(e_\theta(x_s^j))-x_s^j \right\|^2
        + \tfrac{1}{2J}\sum_{j=1}^J     \left\|     d_\theta(e_\theta(x_t^j))-x_t^j  \right\|^2,
\]
and
\[
    \mathcal L_{\mathrm{pred}}(\theta)
        = \tfrac{1}{J}\sum_{j=1}^J \left\|     d_\theta\!\left(m_{\theta,s^j\to t^j}(e_\theta(x_s^j))\right)-x_t^j \right\|^2 .
\]
Here, $\theta$ denotes the trainable parameters of the encoder, decoder, and latent flow. Since $\mathcal L_{\mathrm{pred}}$ measures an error along the learned trajectory map, other norms may also be natural. In particular, an $\ell^1$ loss can provide a more directly interpretable accumulated trajectory discrepancy. For simplicity and consistency across the experiments, however, we use squared $\ell^2$ losses in the numerical investigations below.

The following losses operationalize the two mechanisms introduced in \Cref{sec:pclflow}: matching the decoder-induced pullback and enforcing latent conformity along the learned flow. The conformity contribution depends on the physical structure of the underlying system and may contain one or several terms. We distinguish between pullback conformity, which aligns a latent functional $\mathcal C_Z$ with the decoder-induced physical quantity $\mathcal C\circ d_\theta$, and latent conformity, which enforces preservation or dissipation of $\mathcal C_Z$ along the latent flow.

Accordingly, we write
\begin{equation}
    \mathcal L_{\mathrm{conf}}(\theta) =\lambda_{\mathrm{pullback}}\mathcal L_{\mathrm{pullback}}(\theta) + \lambda_{\mathrm{latent}}\mathcal L_{\mathrm{latent}}(\theta),
    \label{eq:conformity-loss}
\end{equation}
with $\mathcal L_{\mathrm{latent}}$ replaced by its dissipative variant when appropriate. The relevant terms and weights are chosen separately for each experiment. We set $\lambda_{\mathrm{auto}}=\lambda_{\mathrm{pred}}=1$ throughout. When the desired structure in the latent space can be enforced exactly, the corresponding conformity term is replaced by a hard architectural constraint on the latent flow $m_{s\to t}$.

The pullback term is used when the relation between the latent functional and the decoded physical quantity is learned rather than exact. In its simplest form, we match the latent functional to the decoder-induced physical quantity on encoded source states,
\begin{equation}\label{eq:pullback-loss}
    \mathcal L_{\mathrm{pullback}}(\theta) = \tfrac{1}{J} \sum_{j=1}^J \left\| \mathcal C_Z(e_\theta(x_s^j)) - \mathcal C(d_\theta(e_\theta(x_s^j))) \right\|^2 .
\end{equation}
This encourages $\mathcal C_Z$ to approximate the decoder-induced pullback $\mathcal C\circ d_\theta$ on the encoded data manifold. In some experiments, we also apply the same matching term to additional latent states, such as $e_\theta(x_t^j)$ or $m_{\theta,s^j\to t^j}(e_\theta(x_s^j))$, to align the pullback along states visited by the learned dynamics.

For equality-type constraints, latent conformity may be imposed through
\begin{equation} \label{eq:latent-conformity-loss}
    \mathcal L_{\mathrm{latent}}(\theta) = \tfrac{1}{J} \sum_{j=1}^J \left\| \mathcal C_Z\left( m_{\theta,s^j\to t^j}(e_\theta(x_s^j)) \right) - \mathcal C_Z(e_\theta(x_s^j)) \right\|^2.
\end{equation}
For dissipative constraints, this may be replaced by the one-sided penalty
\begin{equation}\label{eq:latent-conformity-loss-ineq}
    \mathcal L_{\mathrm{latent}}(\theta) = \tfrac{1}{J} \sum_{j=1}^J \left\| \left[ \mathcal C_Z\left( m_{\theta,s^j\to t^j}(e_\theta(x_s^j)) \right) - \mathcal C_Z(e_\theta(x_s^j)) \right]_+ \right\|^2 .
\end{equation}
Thus, $\mathcal L_{\mathrm{pullback}}$ connects $\mathcal C_Z$ to the decoded physical quantity, whereas $\mathcal L_{\mathrm{latent}}$ controls how $\mathcal C_Z$ evolves under the latent flow.

This general setup is intentionally kept simple. Additional terms, such as semigroup-consistency losses, regularization of the encoder and decoder, or state-space penalties on decoded physical quantities, can be incorporated naturally. In the present numerical section, however, we focus on the minimal modifications needed to demonstrate the effect of physics-conforming latent dynamics. Remaining experiment-specific architectures, data generation procedures, constraint choices, and evaluation metrics are described in the corresponding subsections. To support reproducibility and hands-on engagement with the proposed framework, we provide tutorial-style demonstration code on \href{https://github.com/matthiaschung/physics-conforming-latent-twins}{GitHub}, with additional code to be made available upon acceptance.

\subsection{ODE benchmarks} \label{sec:ODEs}

We consider four ODE benchmarks that progress from linear conservative dynamics to nonlinear, Hamiltonian, and dissipative systems, thereby illustrating different strategies for imposing latent constraints. In \Cref{sec:ROM}, we study a linear harmonic oscillator and compare the resulting Latent Twin with POD and DMD baselines. In \Cref{sec:sir}, we turn to a nonlinear multigroup SIR model, where total-population conservation is encoded through a softly enforced linear latent constraint. In \Cref{sec:pendulum}, we consider the undamped pendulum, using a hard Hamiltonian architectural constraint to enforce a quadratic latent invariant, together with an additional comparison against a PINN baseline. Finally, \Cref{sec:dissipative} studies the Rosenbrock gradient flow as a dissipative benchmark, in which monotone energy decay provides the relevant physics-conforming structure.

\subsubsection{Harmonic oscillator}\label{sec:ROM}

We train our Physics-conforming Latent Twins on stable linear ODE systems of the form
\begin{equation}\label{eq:linear-ode}
    x' = Mx
\end{equation}
where $M$ has purely imaginary eigenvalues, so the system exhibits energy-preserving oscillations. We consider a high-dimensional system with $n_x=100$, where the dynamics admit a low-rank representation $M=U^\top A U$, with $U \in \mathbb{R}^{10 \times 100}$ having orthonormal rows and $A \in \mathbb{R}^{10 \times 10}$ skew-symmetric. The initial condition is drawn randomly as $x_0 \sim \mathcal{U}([0,1]^{n_x})$ and trajectories are integrated over $t \in [0, 10]$.

In parallel to the Proper Orthogonal Decomposition (POD) convention, we choose a linear projection encoder $P \in \mathbb{R}^{10 \times 100}$ with orthonormal rows and decoder $P^\top$, yielding the projected latent twin model
\begin{equation}\label{eq:projected-latent-twin}
    \Psi(t,s,x_s) = P^\top\exp((t-s)W)P x_s.
\end{equation}

To promote the energy-preserving structure, we regularize the learned latent operator toward skew-symmetry,
\begin{equation} \label{eq:skew-symmetric-loss}
    \mathcal{L}_{\mathrm{latent}} = \|W+W^\top\|_{\mathrm F}^2, \qquad \lambda_{\mathrm{latent}} = 10^{-3}.
\end{equation}
Equivalently, this is an operator-level latent-conformity penalty for the quadratic latent invariant $\mathcal C_Z(z)=\|z\|_2^2$. The Physics-conforming Latent Twin is trained for $200$ epochs using Adam with fixed learning rate $10^{-3}$.

Since the linear reconstruction map already yields a quadratic decoder-induced pullback, we set $\lambda_{\mathrm{pullback}}=0$ and enforce the compatible latent invariant directly through the skew-symmetry constraint described in \Cref{prop:affine_linear_exact_constraints}.

We compare the Physics-conforming Latent Twins against two classical reduced-order modeling methods. POD projects the dynamics onto the leading $r=10$ empirical modes extracted from the training trajectories and integrates the projected system forward in time. In this particular setting, the POD solution serves as an optimal achievable projection benchmark for models of the form \Cref{eq:projected-latent-twin}. Dynamic Mode Decomposition (DMD) identifies a best-fit linear operator directly from snapshot pairs in the original state space and uses it for prediction. Both baselines use the same training data as the Physics-conforming Latent Twins (trajectory before training pair selection).

The results are presented in \Cref{fig:linear-ODE}. POD provides a favorable benchmark because it directly projects onto the dominant invariant subspace of the dynamics and therefore achieves near machine-precision error ($\approx 10^{-7}$). While the error of DMD grows to approximately $10^{-2}$ over the prediction horizon, the Physics-conforming Latent Twin maintains stable errors around $10^{-3}$. This stability indicates that the learned latent flow captures the long-time oscillatory structure of the system from paired state observations, without requiring explicit access to the governing matrix.

\begin{figure}
    \centering
    \includegraphics[width=0.7\linewidth]{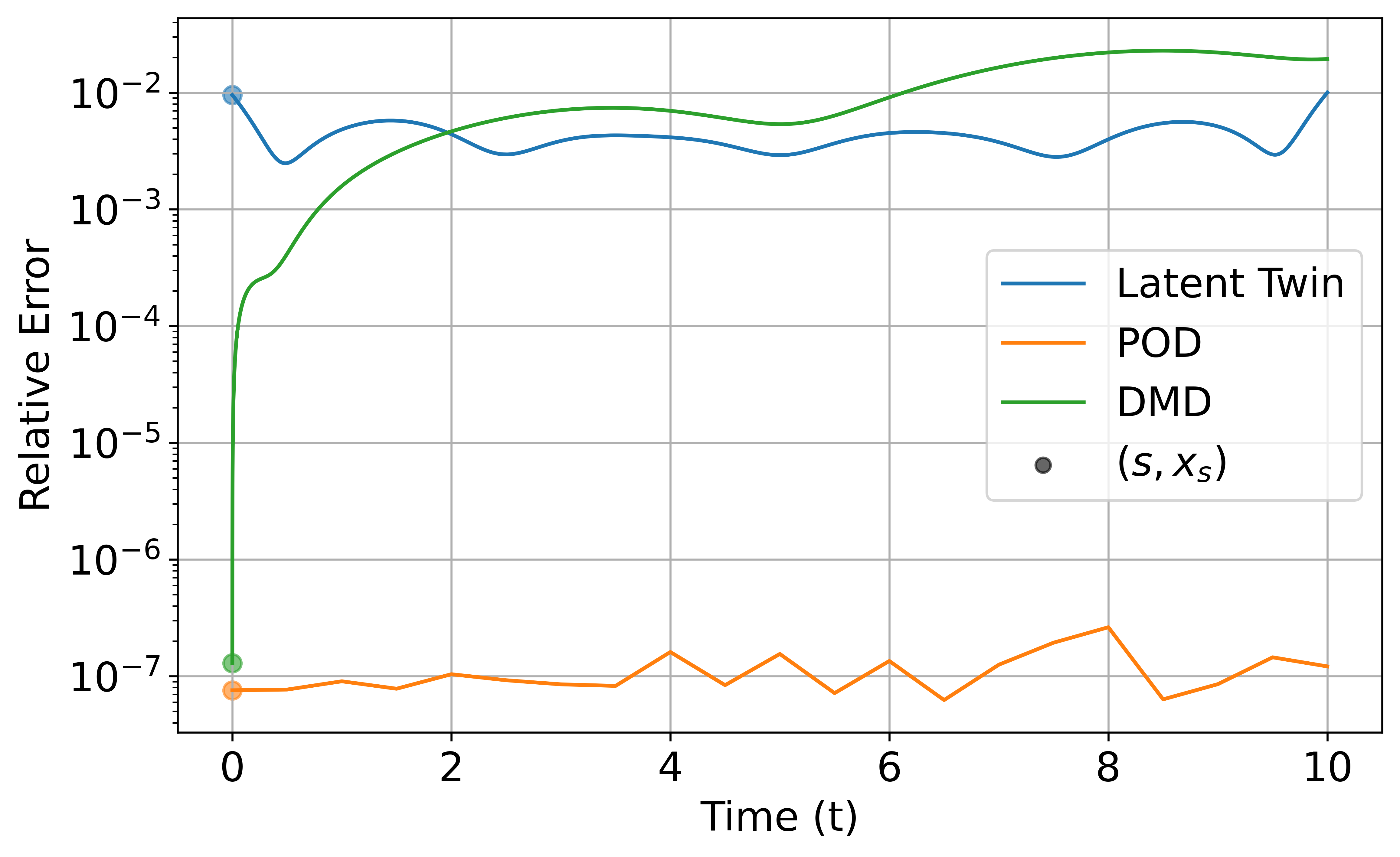}
    \caption{Comparison of the Physics-conforming Latent Twin, POD, and DMD on the stable linear ODE $x'=Mx$ with ambient dimension $n_x=100$ and intrinsic dimension $r=10$. POD provides an optimal projection benchmark in this constructed low-rank linear setting.}
    \label{fig:linear-ODE}
\end{figure}

\subsubsection{SIR}
\label{sec:sir}

We consider a multigroup SIR (Susceptible-Infected-Recovered) system with $n=20$ coupled groups, giving a state dimension of $60$ \cite{hethcote2000mathematics}. Each group evolves according to
\[
    S_i' = -\beta_i S_i \lambda_i, \qquad
    I_i' = \beta_i S_i \lambda_i - \gamma_i I_i, \qquad
    R_i' = \gamma_i I_i,
\]
where $\lambda_i = \sum_j M_{ij} I_j$ is the force of infection by the mixing matrix $M \in \mathbb{R}^{n\times n}$. The mixing matrix is constructed with rank $r = 3$, introducing low-rank coupling between groups. Transmission and recovery rates are drawn independently as $\beta_i \sim \mathcal{U}(0.15, 0.45)$ and $\gamma_i \sim \mathcal{U}(0.05, 0.25)$. Initial conditions are sampled from $S_i(0) \sim \mathcal{U}(0.7, 0.99)$, $I_i(0) \sim \mathcal{U}(0.005, 0.02)$, and $R_i(0) = 1 - S_i(0) - I_i(0)$. Trajectories are integrated over $t \in [0, 60]$.

The SIR dynamics evolve on a nonlinear manifold, so a linear autoencoder as utilized in \Cref{sec:ROM}, is insufficient for accurate long-horizon prediction as shown in \Cref{fig:linear_sir}. We therefore adopt a nonlinear autoencoder. The encoder and decoder are fully connected multilayer perceptrons with $\tanh$ activations. The encoder uses the width schedule $60 \to 16 \rightarrow 8 \rightarrow n_z$  and the decoder mirrors this architecture. We set $n_z=3$. The latent dynamics are modeled by the linear exponential flow $z_t = \exp((t-s)W)z_s$. Thus, the nonlinear autoencoder provides the expressive embedding, while the latent evolution remains linear and structure-constrained.

The SIR system preserves total population, $\frac{\mathrm d}{\mathrm dt} \sum_{i=1}^n (S_i+I_i+R_i)=0,$ which corresponds to a linear conservation law in the state space. Following \Cref{tab:latent_constraint_classes}, we encode this structure through the pullback and latent conformity losses in \Cref{eq:pullback-loss,eq:latent-conformity-loss}. We use
\[
    \mathcal C(x)=\mathbf 1^\top x, \qquad \mathcal C_Z(z)=\mathbf 1^\top z,
\]
and set $\lambda_{\mathrm{pullback}}=\lambda_{\mathrm{latent}}=1$. The pullback loss aligns the latent mass functional with the decoder-induced physical mass, while the latent conformity loss penalizes violations of this mass along the latent flow. Thus, the SIR experiment uses the general conformity loss in \Cref{eq:conformity-loss} with both pullback compatibility and latent conformity active.

As shown in \Cref{fig:nonlinear_sir}, the nonlinear autoencoder provides a representation in which the SIR dynamics are well captured by a linear exponential latent flow over the observed interval. The contrast with the linear autoencoder indicates that the main limitation is not the use of a linear latent flow, but the expressiveness of the embedding used to represent the nonlinear SIR solution manifold.

\begin{figure}
\centering
\begin{subfigure}{0.49\textwidth}
  \centering
  \includegraphics[width=\linewidth]{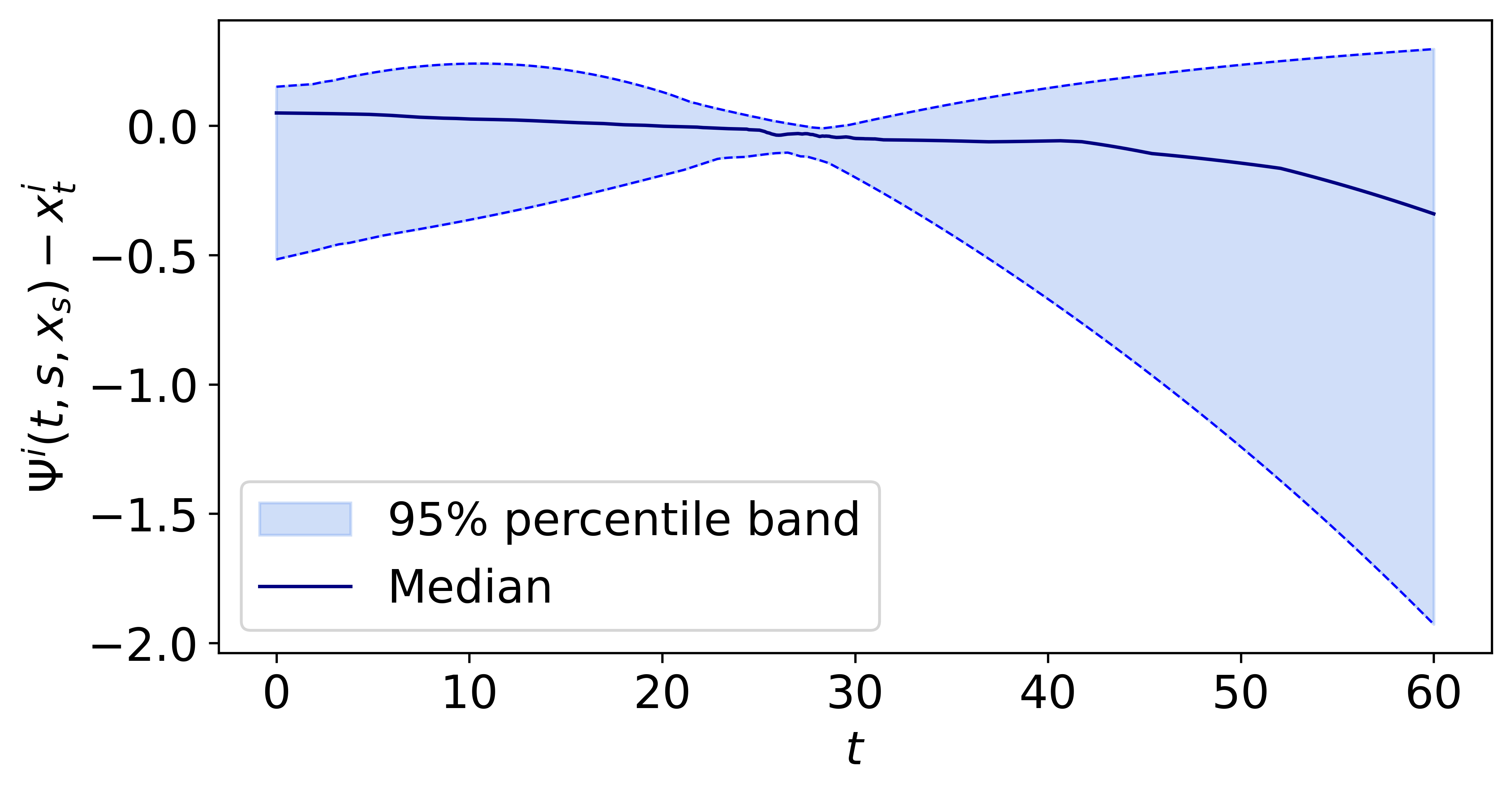}
  \caption{Linear autoencoder}
  \label{fig:linear_sir}
\end{subfigure}
\hfill
\begin{subfigure}{0.49\textwidth}
  \centering
  \includegraphics[width=\linewidth]{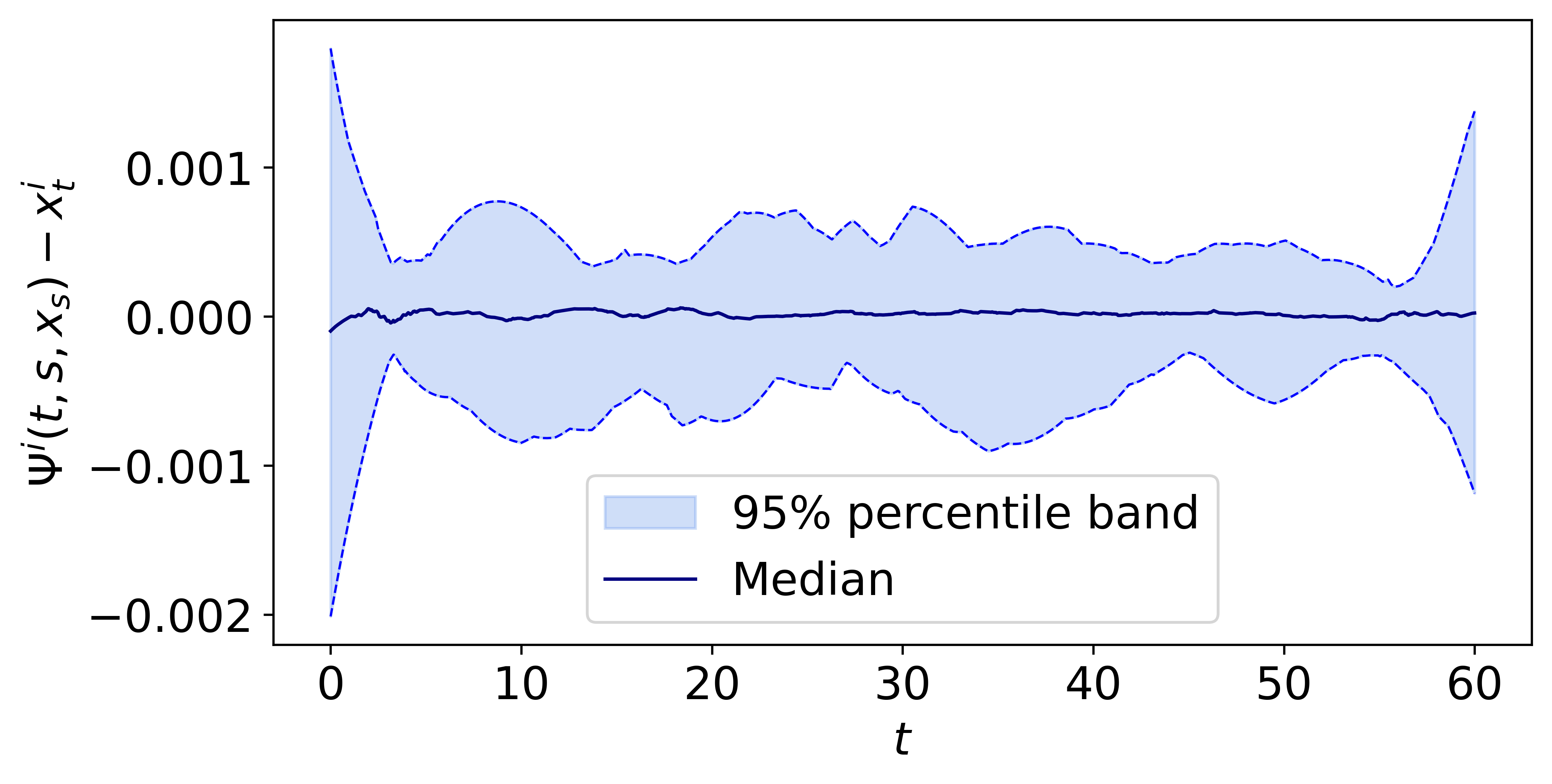}
  \caption{Nonlinear autoencoder}
  \label{fig:nonlinear_sir}
\end{subfigure}
\caption{95\% prediction-error bands for the SIR experiments.  (a) Linear autoencoder: $x_t = P^\top \exp((t-s)W)P x_s$.  (b) Nonlinear autoencoder: $x_t = (d \circ \exp((t-s)W)\circ e) (x_s)$. The nonlinear model reduces the error magnitude by approximately three orders of magnitude relative to the linear model. }
\label{fig:sir_models}
\end{figure}

We also compare against POD and DMD using the same reduced dimension $n_z=3$. POD is computed from the leading empirical modes and uses the projected right-hand side, making it equation-aware \cite{rowley2004model}. DMD instead identifies a best-fit linear evolution operator from snapshot data. In contrast, the Latent Twin learns both the nonlinear representation and the latent evolution directly from paired state observations.

The comparison results are shown in \Cref{fig:sir_compare}. POD is equation-aware, since it uses the projected right-hand side, while DMD is fit directly from the full snapshot trajectory. The Latent Twin nevertheless outperforms both baselines. This improvement reflects the advantage of learning the representation and latent evolution jointly: the nonlinear encoder--decoder captures the curved SIR solution manifold, while the physics-conforming latent operator exploits the low-rank coupling structure without requiring explicit knowledge of the governing equations.

\begin{figure}
    \centering
    \includegraphics[width=0.7\linewidth]{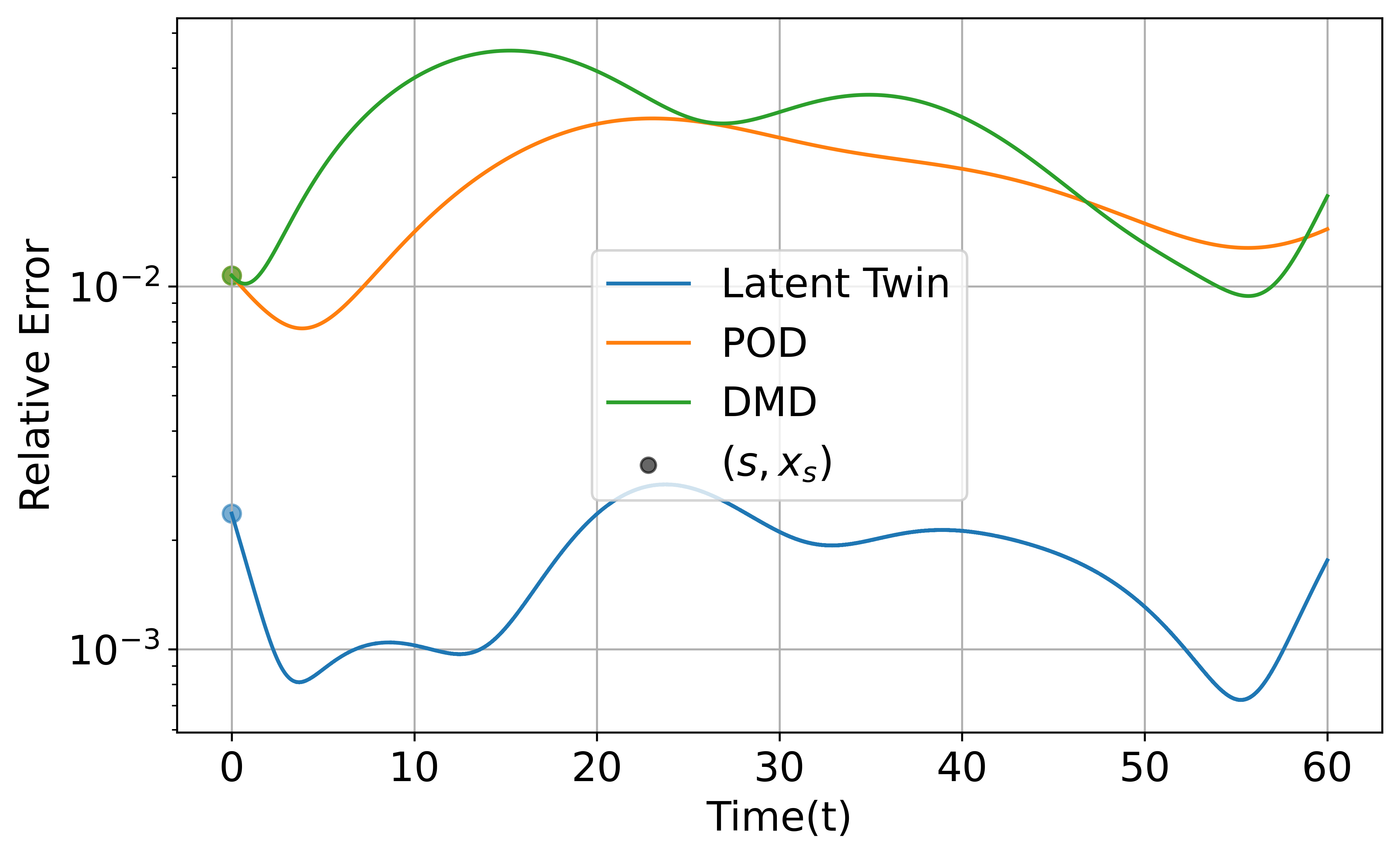}
    \caption{Relative prediction errors for the Physics-conforming Latent Twin, POD, and DMD on the multigroup SIR system with $n_x=60$ and reduced dimension $n_z=3$. The Latent Twin achieves lower error by combining a nonlinear state representation with pullback and latent conformity regularization that encode the population conservation structure.}
    \label{fig:sir_compare}
\end{figure}

\subsubsection{Undamped pendulum}\label{sec:pendulum}

We next consider the undamped pendulum
\[
    \theta'=\omega,\qquad
    \omega'=-\frac{g}{l}\sin\theta,
\]
with $l=1$, $g=9.8$, initial state $(\theta,\omega)=(\pi/4,0)$, and training horizon $t\in[0,10]$. The pendulum is a conservative Hamiltonian system with nonlinear physical energy
\[
    H(\theta,\omega)
    =
    \frac12 \omega^2+\frac{g}{l}\bigl(1-\cos\theta\bigr).
\]
This makes the example qualitatively different from the SIR experiment: the conserved quantity is not a linear functional of the state, so representing it through a simple linear constraint on the latent generator would require the encoder and decoder to absorb the nonlinear geometry of the Hamiltonian. Instead, we impose a quadratic latent invariant as a tractable compatible surrogate for the decoder-induced pullback of the physical Hamiltonian.

Specifically, we use a linear exponential latent flow $z_t=\exp((t-s)W)z_s$
and enforce preservation of the latent quadratic energy
$ E_Z(z)=\frac12 z^\top Q_Z z$, with $Q_Z=Q_Z^\top\succ0$. A sufficient condition for exact preservation of $E_Z$ is $W^\top Q_Z+Q_ZW=0$.
We impose this condition as a hard architectural constraint by parameterizing
\[
    W=JQ_Z,\qquad
    J^\top=-J,\qquad
    Q_Z=B^\top B+\varepsilon I, 
\]
where $B$ is trainable and $\varepsilon=10^{-4}$. Then, along any latent trajectory,
\[
    \frac{\mathrm d}{\mathrm dt}E_Z(z(t)) =  z(t)^\top Q_Z W z(t) = z(t)^\top Q_Z J Q_Z z(t) = 0,
\]
because $Q_ZJQ_Z$ is skew-symmetric. Thus the latent quadratic energy is conserved exactly by construction, so no trajectory-level latent-conformity penalty is needed. Pullback matching may still be used to align the quadratic latent invariant with the decoded physical Hamiltonian.

We evaluate the pendulum example through representative trajectory predictions and an ablation study. The trajectory comparison contrasts an unconstrained Latent Twin, the Physics-conforming Latent Twin, and an energy-regularized physics-informed neural network (PINN). The ablation study then isolates how different mechanisms for imposing the Hamiltonian structure affect prediction error and invariant preservation, including pullback matching and latent conformity through the hard quadratic invariant.

The unconstrained Latent Twin uses the same nonlinear encoder--decoder architecture and exponential latent flow as the Physics-conforming model, but with an unconstrained latent generator $W$. All Latent Twin variants use latent dimension $n_z=6$ and a nonlinear encoder--decoder with hidden dimensions $(16,8)$. We generate $2^{15}$ randomly sampled state pairs $\{(x_s,s),(x_t,t)\}$ from the observed interval $[0,10]$ and use an $80/20$ train--test split. The models are trained jointly for $500$ epochs using Adam with learning rate $10^{-3}$, batch size $256$, and a step scheduler that halves the learning rate every $150$ epochs. We set $\lambda_{\mathrm{auto}}=\lambda_{\mathrm{pred}}=1$ and vary only the conformity terms and latent flow parameterization across the ablation. Each Latent Twin model has only $508$ trainable parameters. Importantly, the Latent Twin models are not given the pendulum differential equation; they use paired trajectory data and, when applicable, prescribed conformity terms.

As a reference physics-informed baseline, we train an energy-regularized PINN, following the PINN framework of \cite{raissi2019physics} and the conservation-law regularization perspective of \cite{baez2024guaranteeing}. The PINN represents the full trajectory directly as a map
\[
    \Psi_{\mathrm{PINN}}:t\mapsto \bigl(\theta_\eta(t),\omega_\eta(t)\bigr)
\]
using a fully connected network with four hidden layers, $64$ neurons per layer, and $\tanh$ activations. Its loss penalizes the residuals
\[
    r_1(t)=\partial_t\theta_\eta(t)-\omega_\eta(t),
    \qquad
    r_2(t)=\partial_t\omega_\eta(t)+\frac{g}{l}\sin(\theta_\eta(t)),
\]
at $500$ collocation points in $[0,10]$, together with a supervised data term on $20$ early-time samples from $[0,2]$ and an energy-conservation penalty enforcing $\frac{\mathrm d}{\mathrm dt}H(\theta_\eta(t),\omega_\eta(t))\approx0$. We use Adam with learning rate $10^{-3}$ for \num{100000} iterations, weight the data term by $100$, and weight the energy penalty by $100$. Time is normalized to $[0,1]$ during training. Since the PINN uses sparse supervised trajectory data but has access to the governing equation and energy functional, whereas the Latent Twins use paired state data without equation residuals, the comparison should be interpreted as a qualitative comparison of extrapolation behavior rather than a strict data-efficiency benchmark. The PINN has $12{,}738$ trainable parameters, compared with $508$ parameters for each Latent Twins model.

\Cref{fig:pendulum} shows the resulting predictions on the extended interval $[0,15]$. The unconstrained Latent Twin fits the observed interval reasonably well but accumulates phase and amplitude error after $t=10$, reflecting the fact that an unconstrained latent generator has no mechanism to enforce conservative long-time behavior. The energy-regularized PINN gives very accurate agreement on the observed interval; in fact, its in-distribution error is smaller than that of both the standard PINN and the Physics-conforming Latent Twins. However, its accuracy still deteriorates outside the observed interval. In contrast, the Physics-conforming Latent Twin maintains a comparable error level beyond $t=10$. The improvement is not due to access to the governing equation or an explicit Hamiltonian residual, but to the hard latent Hamiltonian structure: the learned flow is restricted to an energy-preserving class, which suppresses artificial growth or decay in the latent dynamics and improves long-time behavior after decoding.

This experiment highlights a central advantage of the Physics-conforming Latent Twins formulation. Rather than enforcing the known differential equation pointwise, as in a PINN, the method learns a solution operator in a latent space whose admissible dynamics already respect the relevant qualitative structure. For conservative systems, this distinction is especially important: preserving the geometric form of the dynamics can be more effective for extrapolation than merely reducing a residual loss over the observed interval.

\begin{figure}
\centering
\begin{tabular}{@{}c@{\hspace{1mm}}cc@{}}
\rotatebox{90}{\footnotesize \shortstack{Unconstrained\\ Latent Twins}} &
\includegraphics[width=0.43\textwidth, trim={1.0cm 1.0cm 0.0cm 0.0cm},clip]{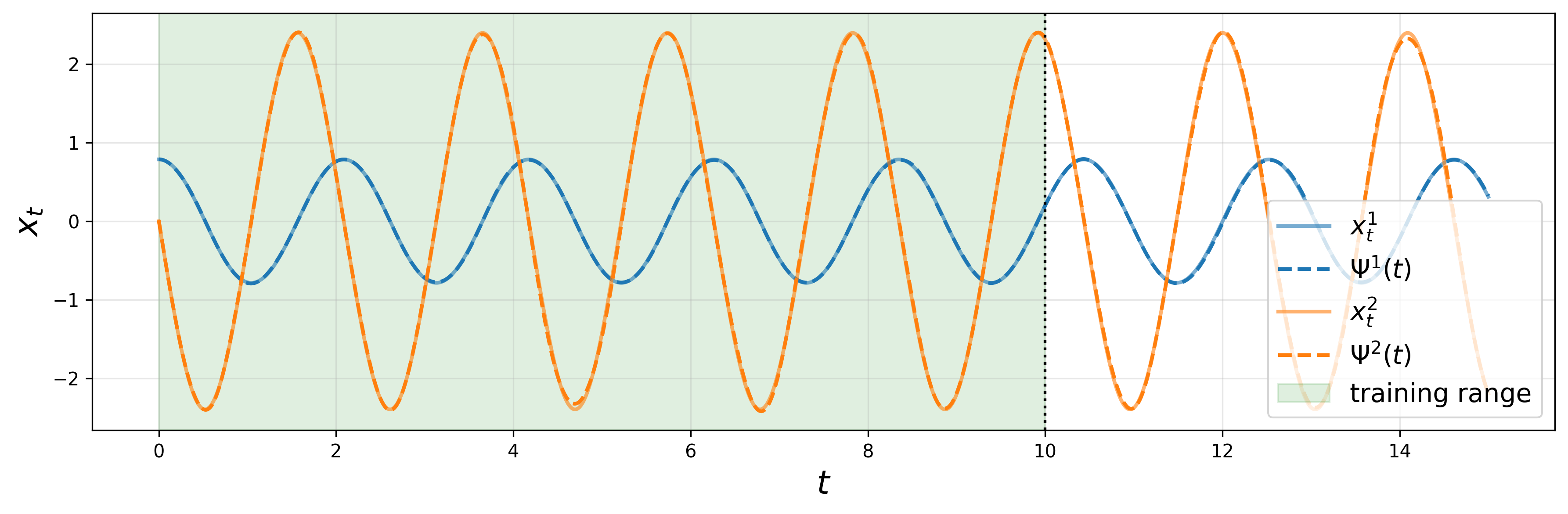} &
\includegraphics[width=0.43\textwidth, , trim={1.0cm 1.0cm 0.0cm 0.0cm},clip]{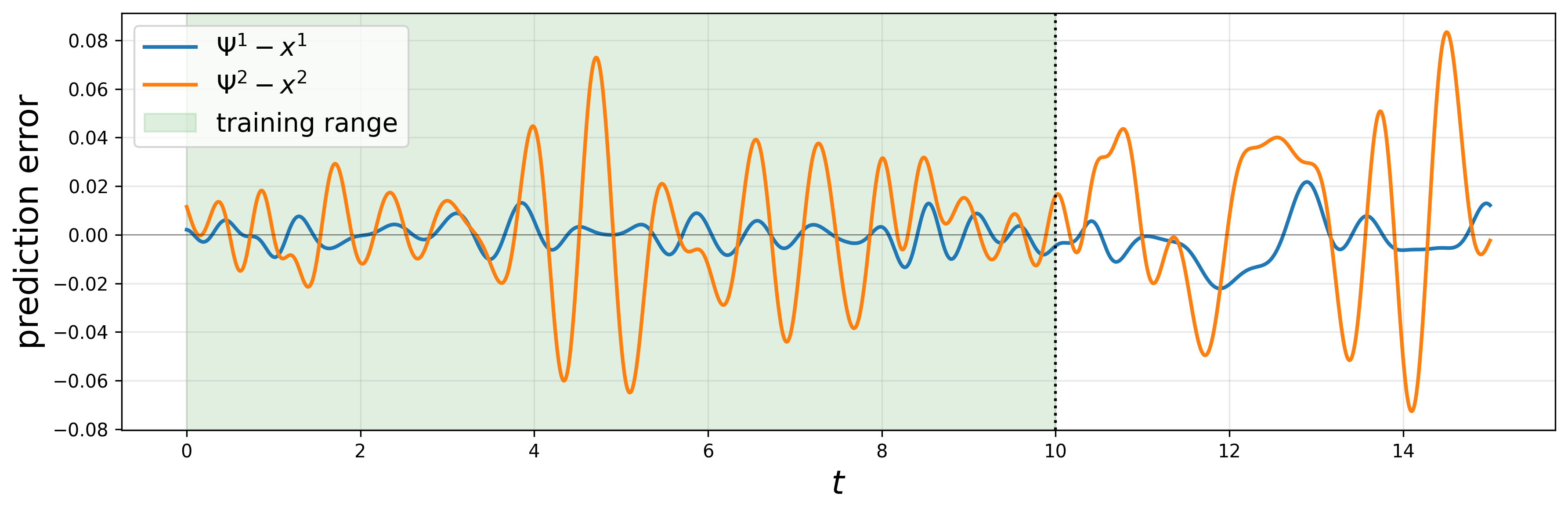} \\[0.5em]

\rotatebox{90}{\footnotesize \shortstack{Energy-\\regularized\\ PINN}} &
\includegraphics[width=0.43\textwidth, trim={1.0cm 1.0cm 0.0cm 0.0cm},clip]{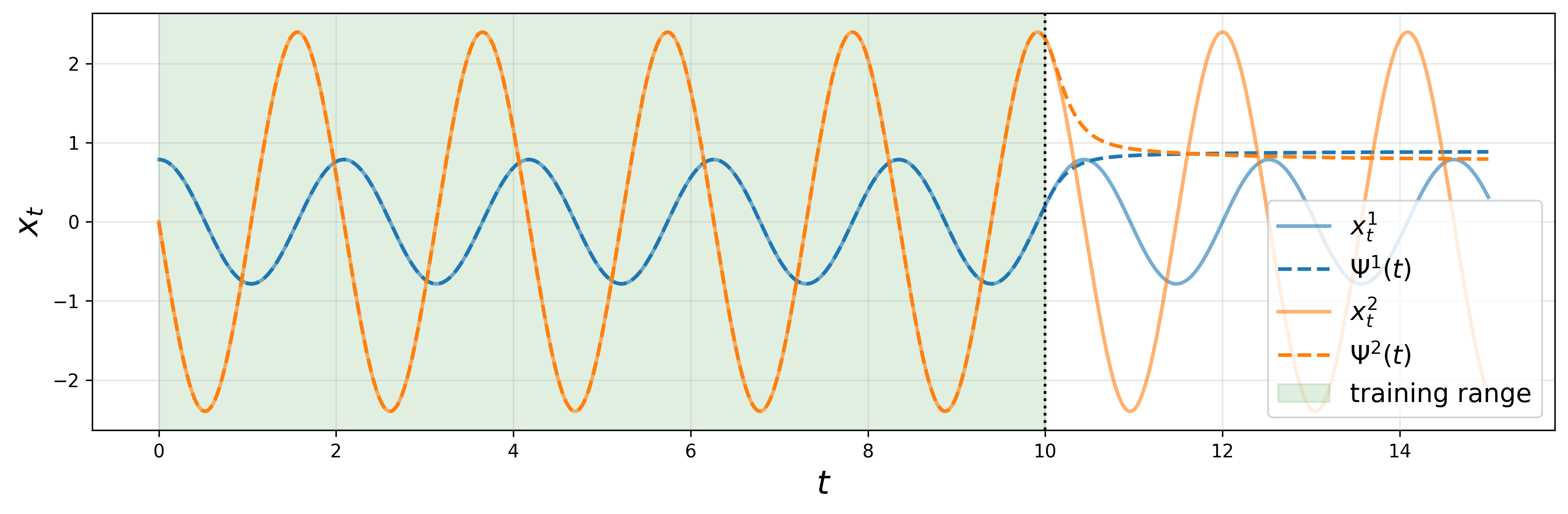} &
\includegraphics[width=0.43\textwidth, trim={1.0cm 1.0cm 0.0cm 0.0cm},clip]{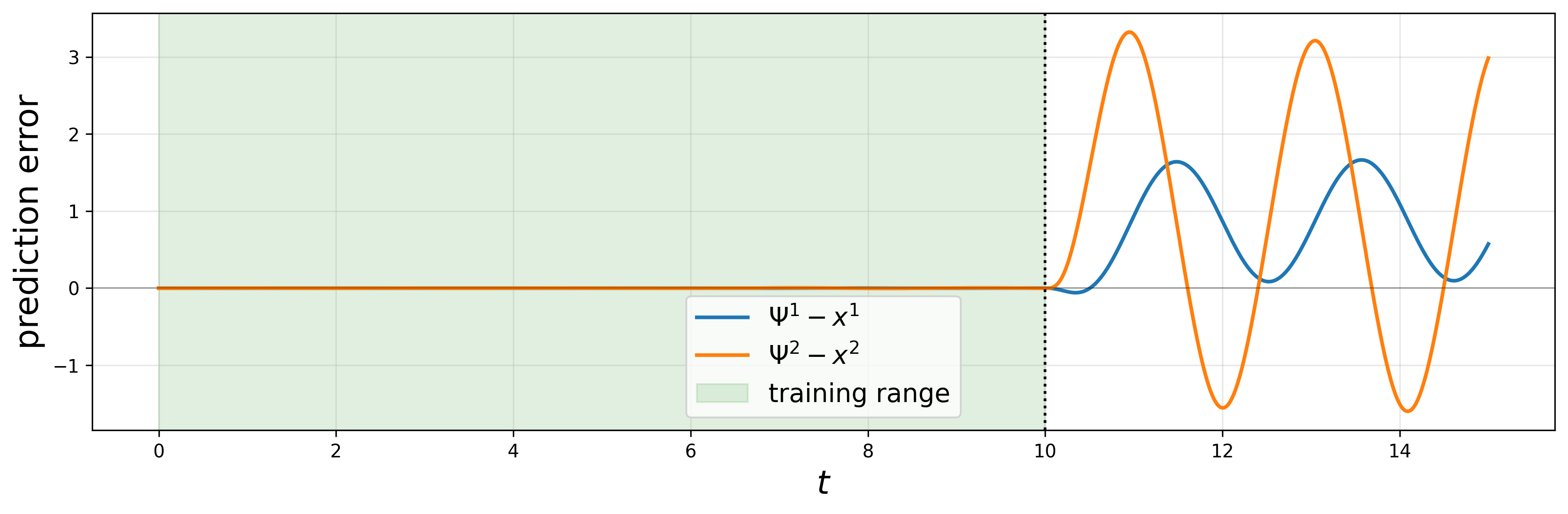} \\[0.5em]

\rotatebox{90}{\footnotesize \shortstack{Physics-\\conforming\\ Latent Twins}} &
\includegraphics[width=0.43\textwidth, , trim={1.0cm 1.0cm 0.0cm 0.0cm},clip]{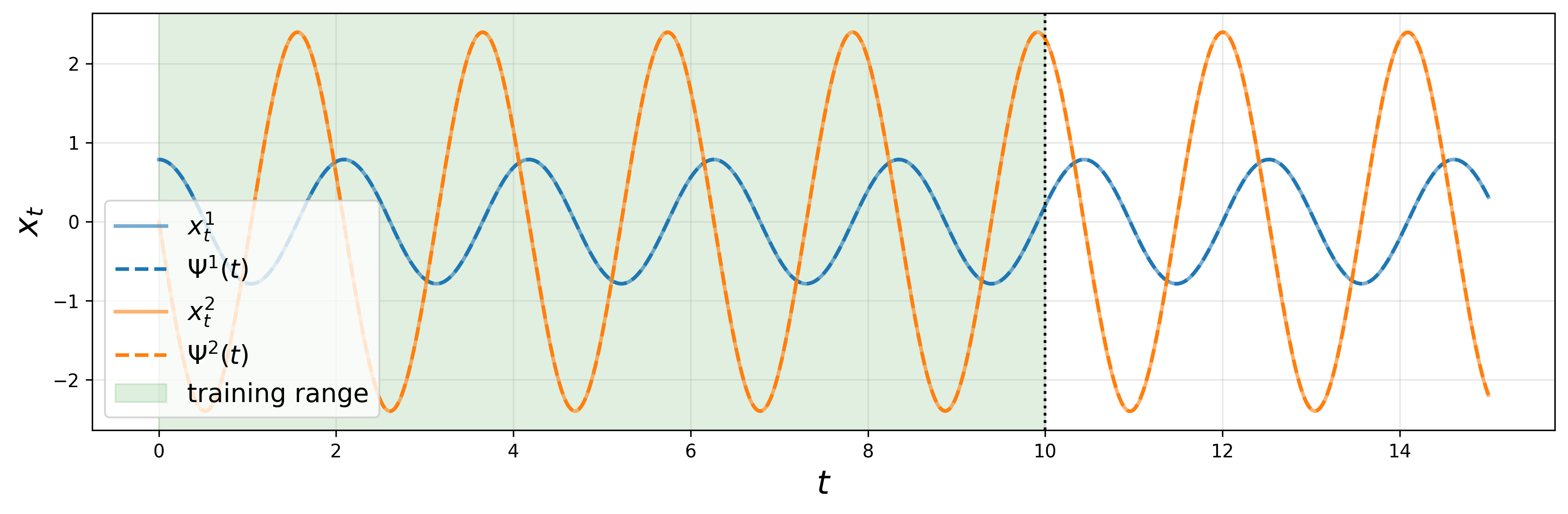} &
\includegraphics[width=0.43\textwidth, , trim={1.0cm 1.0cm 0.0cm 0.0cm},clip]{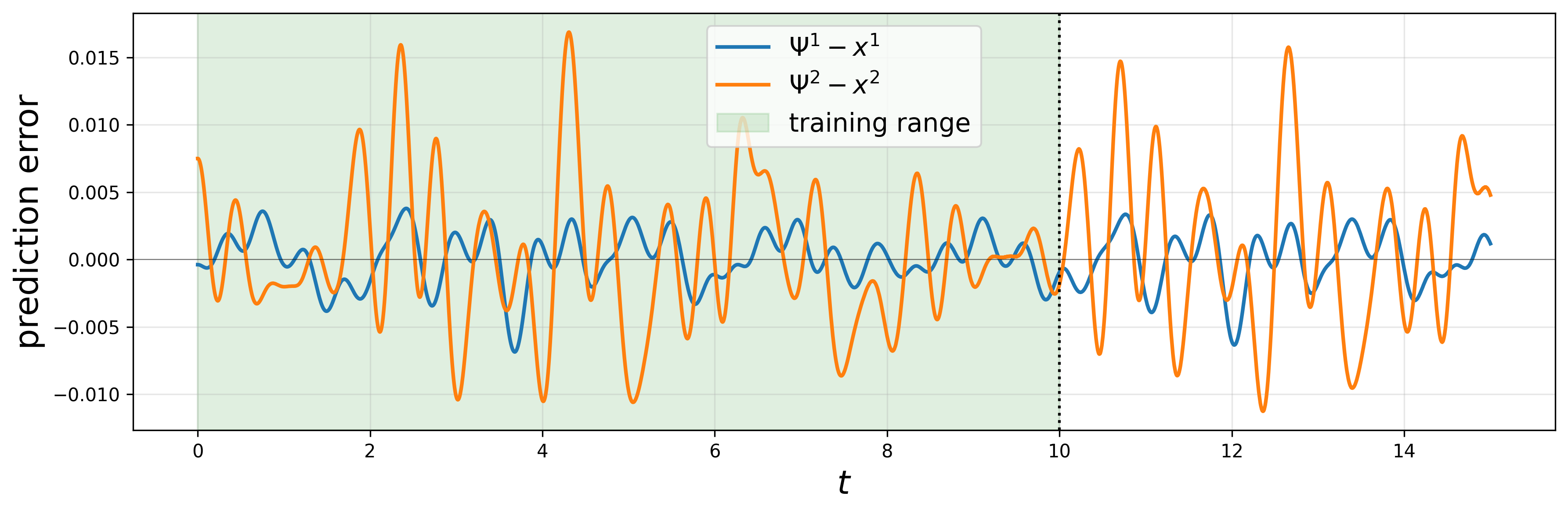}

\end{tabular}
\caption{Undamped pendulum predictions on an extended time interval. All models are trained using information from $t\in[0,10]$ and evaluated on $t\in[0,15]$. The left column shows the predicted state trajectory and the right column shows the prediction error. The unconstrained Latent Twin loses accuracy under extrapolation because the learned latent generator is not structurally restricted. The energy-regularized PINN uses both the governing ODE residual and an energy-conservation penalty, leading to excellent accuracy on the observed interval but degraded extrapolation beyond it. The Physics-conforming Latent Twin preserves a quadratic latent invariant by construction and maintains stable, accurate extrapolation beyond the training horizon.}
\label{fig:pendulum}
\end{figure}

The ablation study in \Cref{fig:compare_performance_and_invariant} shows that the way in which the Hamiltonian structure is imposed has a pronounced effect on extrapolation. The unconstrained Latent Twin performs reasonably on the observed interval, but loses accuracy beyond it. Pullback matching substantially reduces both prediction error and invariant defect by aligning the latent invariant with the decoder-induced physical Hamiltonian. Combining pullback matching with hard latent conformity performs best overall: the latent invariant is physically aligned through the decoder and is simultaneously preserved by the quadratic latent flow parameterization.

\begin{figure}
    \centering
    \includegraphics[width=0.99\linewidth]{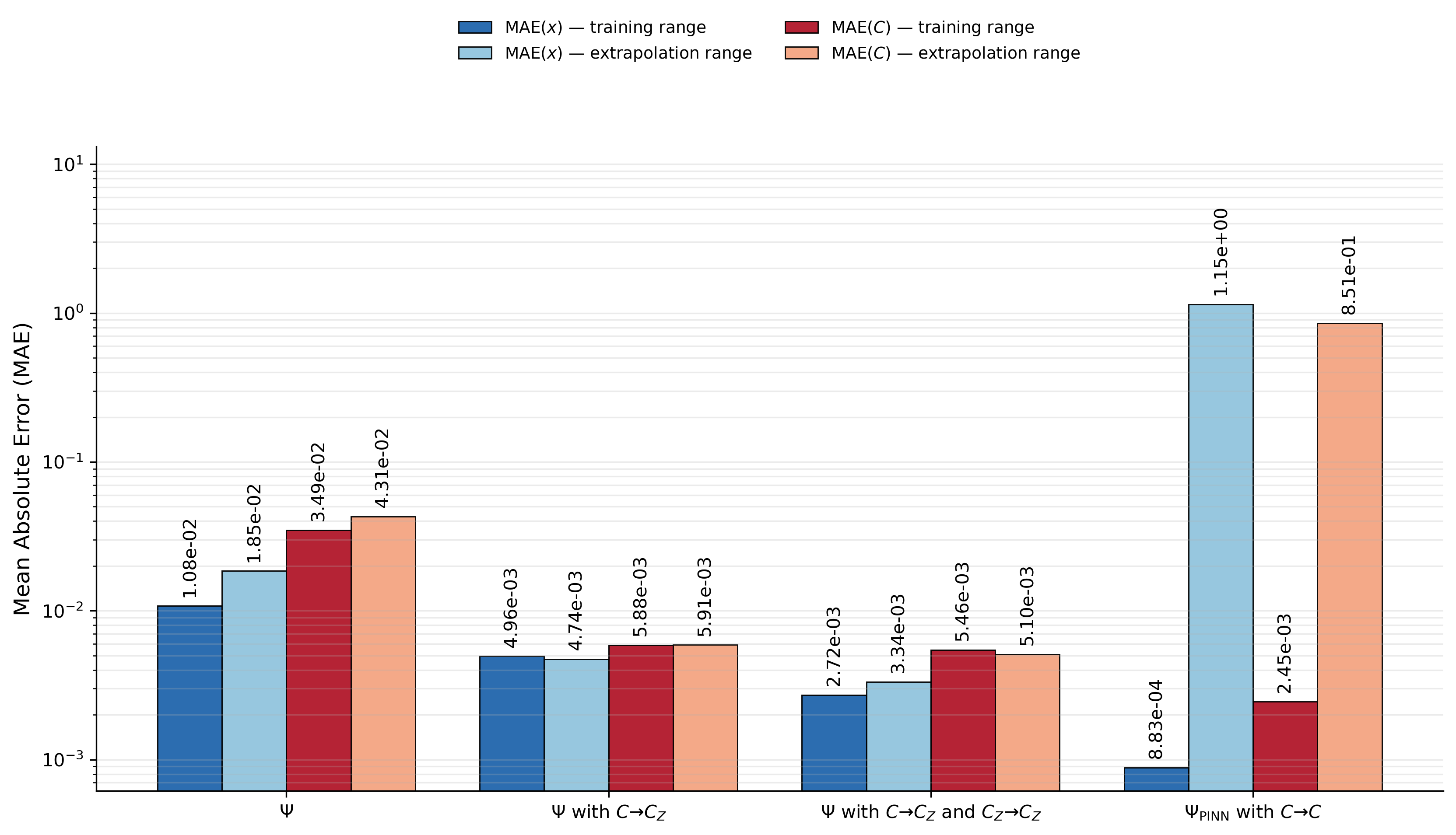}
    \footnotesize
    \caption{Ablation study for the undamped pendulum. Mean absolute prediction errors and Hamiltonian defects are reported on the observed interval $[0,10]$ and the extrapolation interval $[10,15]$. Pullback matching aligns the latent invariant with the decoder-induced Hamiltonian, while latent conformity enforces preservation by the latent flow. Their combination gives the most reliable extrapolation performance and small invariant defects.}
    \label{fig:compare_performance_and_invariant}
\end{figure}

The SIR and pendulum experiments illustrate complementary forms of physics conformity. In SIR, a linear conserved quantity is enforced through pullback and latent conformity losses. In the pendulum example, latent conformity alone already improves extrapolation, but the ablation study shows that the strongest results are obtained when the latent invariant is also aligned with the decoder-induced physical Hamiltonian. Thus, the experiments support the central distinction of the framework: compatibility with the decoded physical structure and consistent propagation by the latent flow.

\subsubsection{Gradient flow}\label{sec:dissipative}

As a dissipative benchmark, we consider gradient-flow dynamics associated with the optimization problem $ \min_{x\in\mathbb R^2} f(x)$, where $f(x_1,x_2)=(1-x_1)^2+100(x_2-x_1^2)^2$. The corresponding dynamics are
\[
    x'(t)=-\nabla f(x(t)),
\]
so that the objective is nonincreasing along exact trajectories: $\frac{\mathrm d}{\mathrm dt} f(x(t)) = -\|\nabla f(x(t))\|^2 \leq 0$. Thus, the Rosenbrock flow provides a simple nonlinear test case in which the learned surrogate should approximate the solution operator while respecting a Lyapunov-type dissipative structure.

Training trajectories are generated from uniformly sampled initial conditions
$x(0)\in[-1.5,1.5]^2$ over the time interval $[0,10]$. We use \num{4096} simulated trajectories, store \num{300} time points per trajectory, and sample \num{16} ordered pairs $s\leq t$ from each trajectory, giving \num{65536} source--target pairs $(s, x(s)), (t,x(t))$. To better resolve the fast initial transient, we use the reparametrized time grid $\tau = \left(t/t_{\max}\right)^{1/4} t_{\max}$, and sample training pairs with a mild bias toward early times. The data are split into $95\%$ training and $5\%$ testing samples and normalized componentwise by min--max scaling.

The Latent Twins use a fully connected autoencoder with encoder widths $2 \rightarrow 32 \rightarrow 32 \rightarrow 32 \rightarrow 16$, SiLU activations, and a mirrored decoder. We compare an unconstrained Latent Twin with exponential latent flow against a physics-conforming variant with the same latent flow but with pullback compatibility and latent dissipation. The physical dissipative quantity is the Rosenbrock objective, $\mathcal C(x)=f(x)$,
and we use the quadratic latent functional $\mathcal C_Z(z)=\|z\|^2$. The pullback loss with $\lambda_{\mathrm{pullback}}=1$ aligns $\mathcal C_Z(e(x))$ with the decoder-induced objective $f(d(e(x)))$. For a source latent state $z_s=e(x(s))$, the latent evolution is $z_t=m_{s\to t}(z_s)=\exp((t-s)W)z_s$, and the dissipative latent penalty is
\[
   \mathcal L_{\mathrm{latent}} = \left[ \|m_{s\to t}(z_s)\|^2-\|z_s\|^2 \right]_+^2, \qquad \lambda_{\mathrm{latent}}=1.
\]
All models are trained with Adam using batch size \num{1024}, learning rate $10^{-3}$, and \num{2000} epochs, with the learning rate halved every \num{500} epochs.

\Cref{fig:OptContour} shows the relative prediction error and a representative phase portrait for the test initial condition $x_s=(-0.5,-0.5)$. Both models track the trajectory fairly accurately on the training horizon, while the extrapolation region beyond $t=10$ separates their behavior. The unconstrained model can drift away from the Rosenbrock valley, whereas the dissipative model remains closer to the true gradient-flow trajectory.

The dissipative penalty also has the intended effect in latent space: after training, violations of the nonexpansivity condition
$\|m_{s\to t}(z_s)\|^2 \leq \|z_s\|^2$ are essentially eliminated for the physics-conforming model, while they persist for the unconstrained Latent Twin.

\begin{figure}
    \centering
    \includegraphics[width=0.48\linewidth]{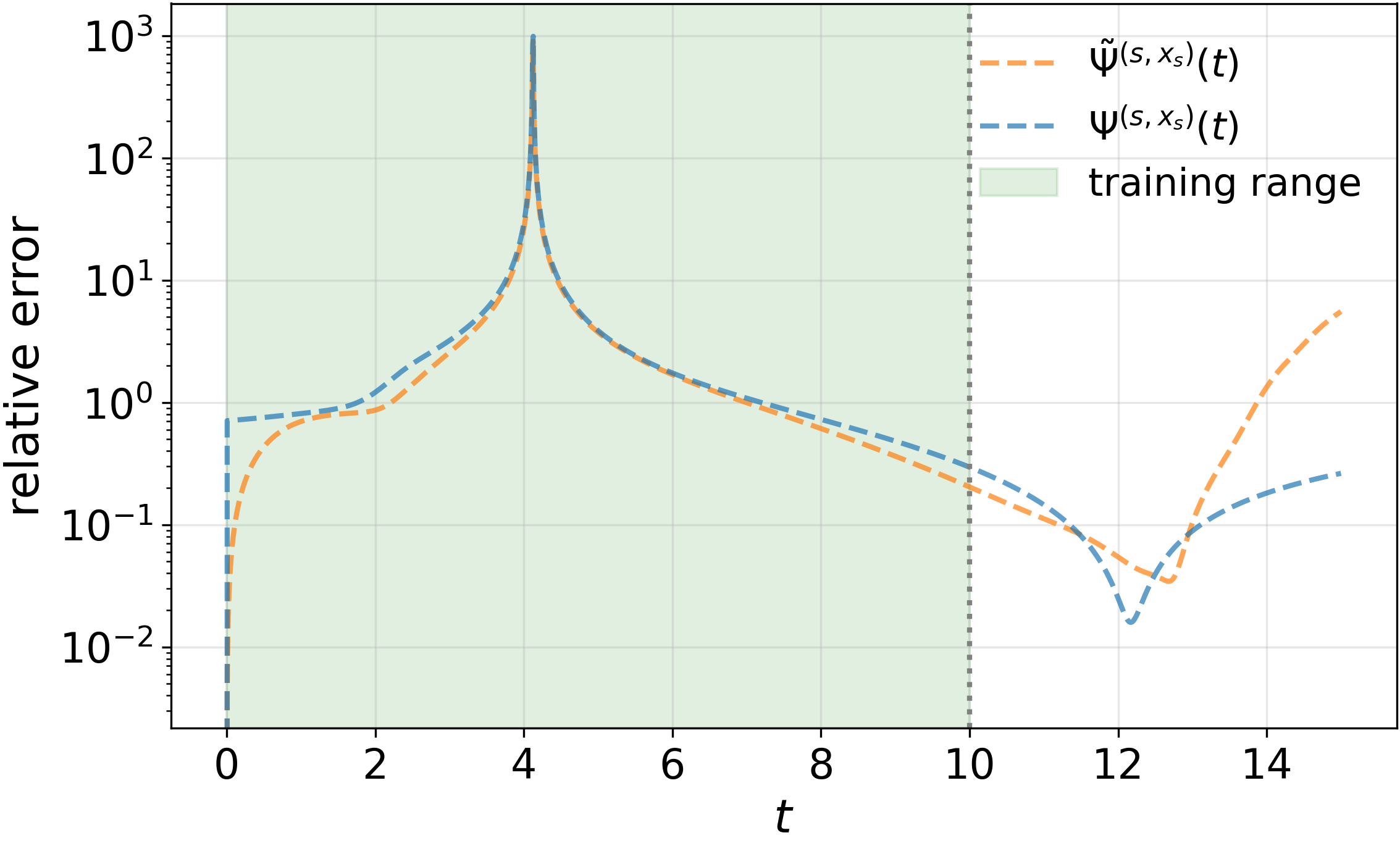}
    \includegraphics[width=0.48\linewidth]{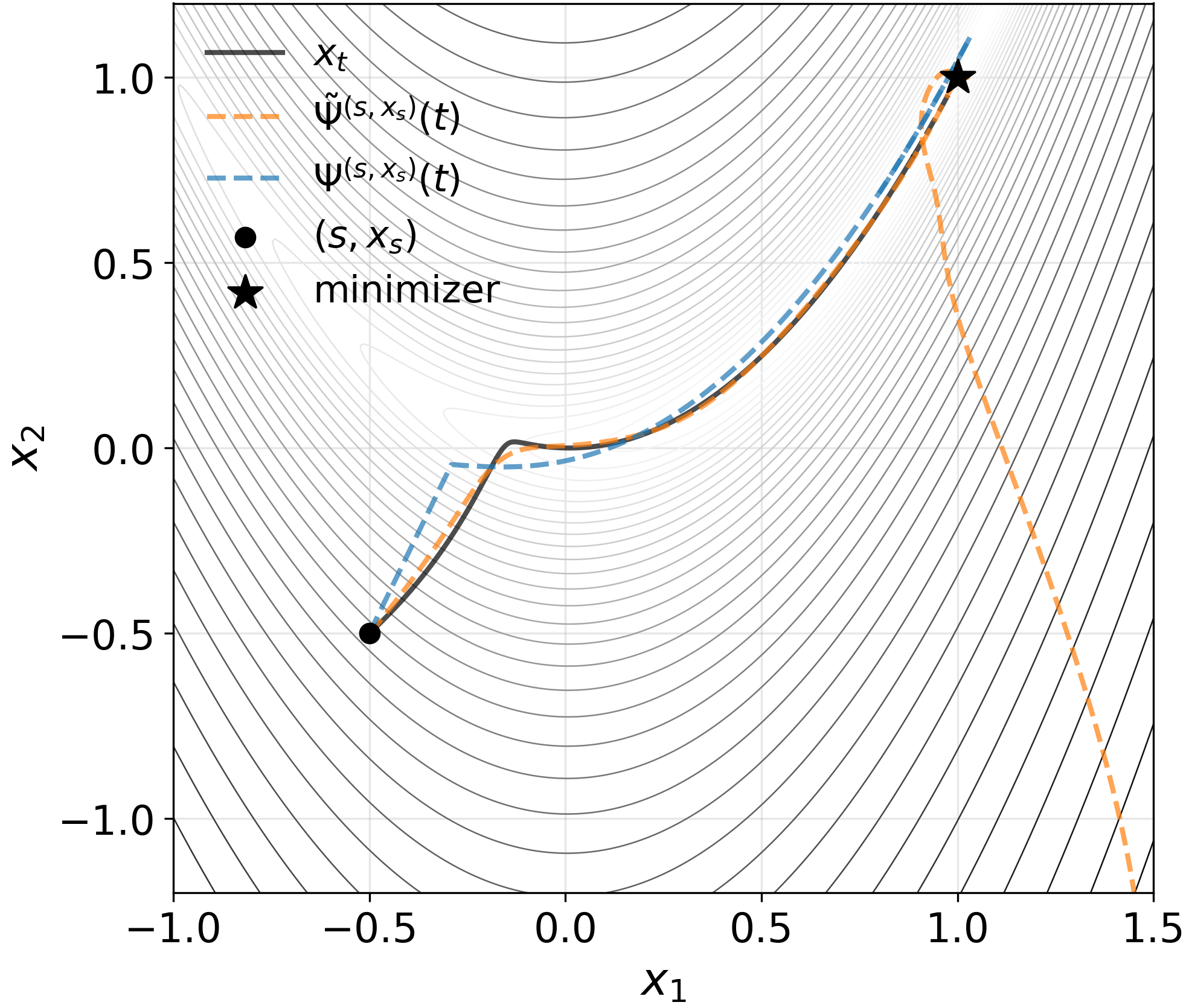}
    \caption{Gradient-flow Rosenbrock example. Left: relative prediction error for the unconstrained Latent Twin $\tilde\Psi(t,s,x_s)$ and the dissipative Latent Twin $\Psi(t,s,x_s)$, with the shaded region indicating the training horizon. Right: representative trajectory in the Rosenbrock landscape. Both models are accurate on the training range, while the dissipative model remains closer to the true trajectory under extrapolation.}
    \label{fig:OptContour}
\end{figure}

\subsection{PDEs}\label{sec:PDEs}

We extend the framework to two canonical PDE systems on the periodic domain $\Omega=[0,2\pi]^2$: the heat equation, which is dissipative, and the wave equation, which is conservative Hamiltonian. Both are simulated using the spectral solver \texttt{Dedalus} \cite{burns2020dedalus} with periodic boundary conditions. Initial conditions are random smooth periodic fields constructed from finite Fourier expansions with randomized amplitudes and phases. From each trajectory we extract snapshot pairs $(x(s), x(t))$ with $s \leq t$ drawn uniformly, forming training pairs as in \Cref{sec:numsetup}. The encoder and decoder are convolutional networks and the latent flow takes the exponential form $m_{s\to t}(z)=\exp((t-s)W)z$ in both experiments. Models are pretrained for $20$ epochs on the autoencoder loss and then trained jointly for $150$ epochs with Adam at learning rate $3\times10^{-4}$ and batch size $64$.

\subsubsection{Heat equation} \label{sec:heat}

We next consider the two-dimensional heat equation $\partial_t u = \kappa \Delta u$, which describes the diffusion of a scalar temperature field $u(x,y,t)$. On a periodic domain, the heat equation exhibits two complementary structural properties. First, the spatial mean $C_M[u](t) = \int_\Omega u(x,y,t)\,\mathrm{d}x\,\mathrm{d}y$ is conserved. Second, the $L^2$ energy $C_E[u](t) = \tfrac12 \int_\Omega u(x,y,t)^2\,\mathrm{d}x\,\mathrm{d}y$
is monotonically dissipated, since $
    \dfrac{\mathrm{d}}{\mathrm{d}t} C_E[u](t) = -\kappa \int_\Omega |\nabla u(x,y,t)|^2\, \mathrm{d}x\,\mathrm{d}y \leq 0. $
This example therefore falls into the mixed constraint class described in \Cref{tab:latent_constraint_classes}: a linear conserved quantity together with a dissipative quadratic functional.

For the latent model, we use latent dimension $n_z=32$ and a linear latent flow generated by an operator $W$. The physics-conforming parameterization decomposes
\[
    W = K + S,  \qquad  K = A-A^\top,  \qquad  S = -BB^\top - 0.001\, I,
\]
so that the symmetric part is negative definite and therefore induces latent energy decay. Conservation of the latent mass functional is enforced exactly by projecting the operator onto the constraint
subspace satisfying $W^\top c = 0$. Thus, the latent dynamics combine a conservative linear constraint with a dissipative quadratic structure, mirroring the physical behavior of the heat equation.

The results show that this constrained latent flow accurately captures the diffusive evolution. In \Cref{fig:heatRollout}, the predicted temperature fields remain visually indistinguishable from the reference solution over the full horizon. The model reproduces both the large-scale smoothing induced by diffusion and the gradual attenuation of high-frequency spatial features. The absolute errors remain small throughout the trajectory and are concentrated mainly in localized regions where the initial condition contains sharper spatial variations.

The reconstruction and prediction plots further indicate that the autoencoder learns a stable low-dimensional representation of the heat dynamics: reconstruction errors are small, while the evolution errors remain controlled even after repeated propagation in latent space. This is particularly important for the heat equation, where an unconstrained latent model could easily introduce spurious growth or drift in the mean temperature.

Finally, \Cref{fig:heatDiag} confirms that the learned dynamics respect the intended physical structure. The predicted total heat remains nearly constant along the horizon, while the latent energy decays monotonically, consistent with the dissipative character of the heat equation. Thus, the model does not only approximate the solution fields accurately; it also reproduces the qualitative conservation and dissipation laws that characterize the underlying PDE.

\begin{figure}
    \centering
    \includegraphics[width=0.9\linewidth]{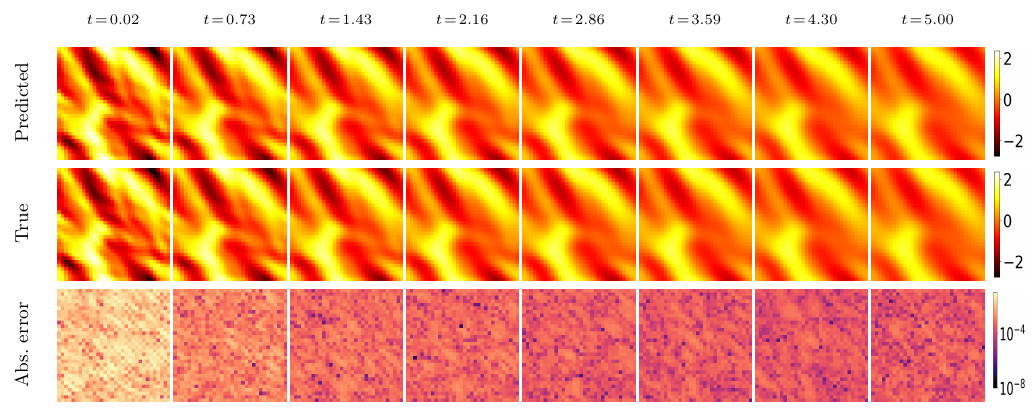}
    \caption{Heat equation predictions. Predicted temperature fields (row 1), true temperature fields (row 2), and absolute errors (row 3) at eight time snapshots over the horizon $t\in[0,5]$. The Physics-conforming Latent Twins accurately tracks the diffusive evolution, with small localized errors throughout the horizon.}

\label{fig:heatRollout}

\end{figure}

\begin{figure}
    \centering
    \includegraphics[width=0.7\linewidth]{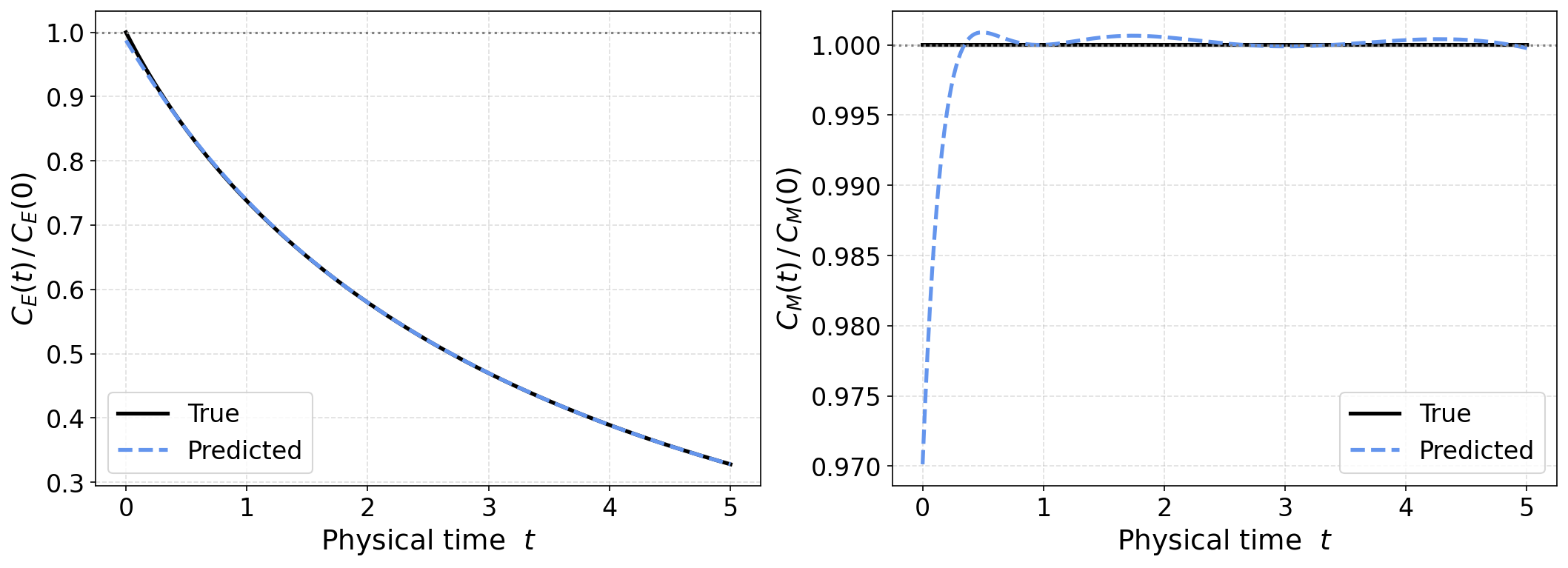}
    \caption{Conservation diagnostics for the heat Latent Twin. Top: $L^2$-energy $C_E[u](t)$ for the true trajectory and the surrogate prediction, showing monotone decay in both. Bottom: spatial mean $C_M[u](t)$, which the physics-conforming
    model preserves to near-machine precision.}
    \label{fig:heatDiag}
\end{figure}

\subsubsection{Wave equation} \label{sec:wave}

As a conservative PDE benchmark, we consider the wave equation in first-order form,
\[
    \partial_t q = p,
    \qquad
    \partial_t p = c^2 \Delta q ,
\]
where $q$ denotes the displacement field and $p$ its velocity. In contrast to the heat equation, the wave equation is nondissipative: oscillatory spatial structures propagate through the domain rather than being smoothed out. On a periodic domain, the continuous dynamics conserve the Hamiltonian
\[
    C_H[q,p] =  \tfrac12
    \int_\Omega
    \left(
        p^2 + c^2 |\nabla q|^2
    \right)
    \,\mathrm{d}x\,\mathrm{d}y .
\]
Thus, the relevant physical structure is conservation of a quadratic energy functional.

We encode this conservative structure by using a quadratic latent invariant. The latent dimension is set to $n_z=64$ to represent the two-field state $(q,p)$. The latent dynamics are generated by a linear operator $W$ parameterized as $W = A-A^\top$, so that $W^\top+W=0$. Consequently, the latent flow preserves the quadratic latent energy $C_z(t)= \tfrac12 \|z(t)\|^2$ exactly, i.e., $\frac{\mathrm d}{\mathrm dt}\, \tfrac12 \|z(t)\|^2 = z(t)^\top W z(t) = 0$.
This hard architectural constraint prevents artificial latent damping or growth and mirrors the conservative character of the wave equation.

\begin{wrapfigure}{r}{0.5\linewidth}
    \centering
    \includegraphics[width=\linewidth, trim={0 0 0 0.75cm},clip]{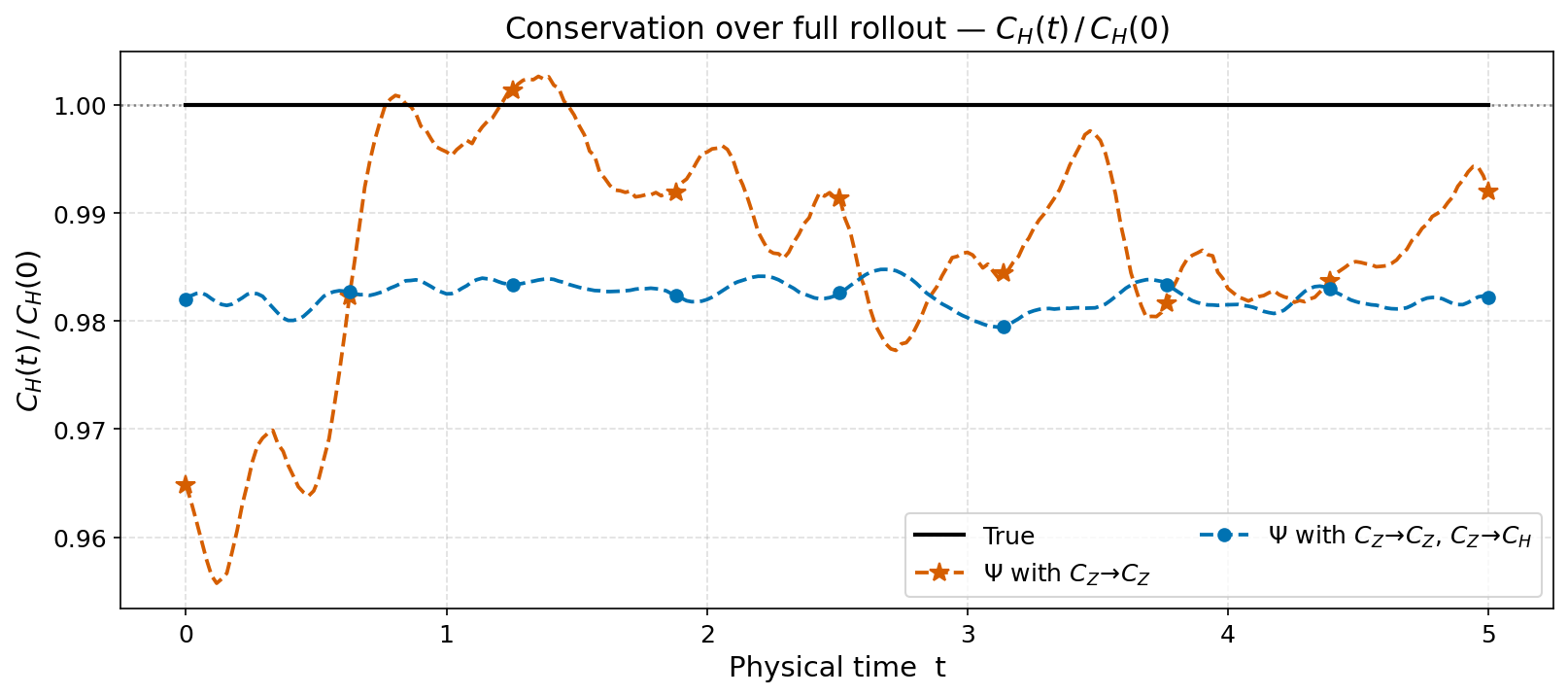}
    \caption{Ablation study for the wave equation. Normalized Hamiltonian
$C_H(t)/C_H(0)$ over the full horizon for two configurations that share
the hard skew-symmetric constraint and differ only in
whether pullback matching additionally aligns the latent invariant with the
decoder-induced Hamiltonian.}
    \label{fig:waveAblation}
\end{wrapfigure}
The ablation study in \Cref{fig:waveAblation} compares $C_H(t)/C_H(0)$
over the full horizon for two configurations. Both share the hard
skew-symmetric constraint that preserves $C_z$ exactly; they differ only
in whether pullback matching ($\lambda_{\mathrm{pullback}}=10^{-3}$) is
active. Without pullback alignment, exact preservation of $C_z$ provides
no direct control over $C_H$: the latent invariant and the physical
Hamiltonian remain unrelated representations. Adding pullback matching
aligns $C_z$ with the decoder-induced Hamiltonian, giving the closest
$C_H$ tracking.

\Cref{fig:waveRolloutQ,fig:waveRolloutP} show trajectory predictions for the
pullback-regularized model. The predicted fields closely match the reference
trajectories over the full horizon $t\in[0,5]$, including the persistence and
transport of fine-scale oscillatory structures. The absolute errors remain small
relative to the field amplitudes and are concentrated mainly near sharper spatial
features and phase-sensitive regions. This is a more demanding setting than the
heat equation, since errors are not naturally damped by the underlying dynamics.

The diagnostics in \Cref{fig:waveDiag} further confirm the effect of pullback
regularization. The predicted Hamiltonian remains close to the true value over
the full trajectory, while the latent quadratic invariant is conserved to
numerical precision. The model therefore captures both the observed wave fields
and the nondissipative structure of the underlying PDE.

\begin{figure}
    \centering
    \includegraphics[width=\linewidth]{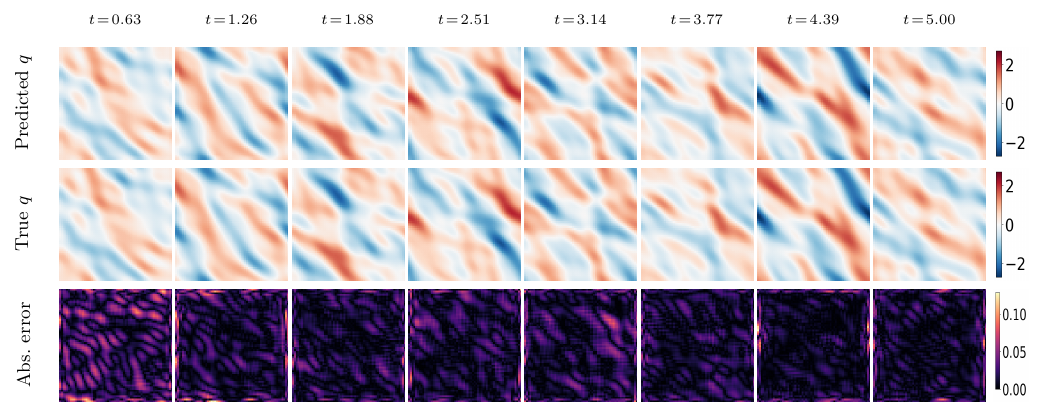}
    \caption{Wave equation trajectory prediction for the displacement field $q$. Predicted fields (row 1), true fields (row 2), and absolute errors (row 3) are shown at eight time snapshots over $t\in[0,5]$. The physics-conforming Latent Twins captures the oscillatory wave dynamics with small localized errors.}
    \label{fig:waveRolloutQ}
\end{figure}

\begin{figure}
    \centering
    \includegraphics[width=\linewidth]{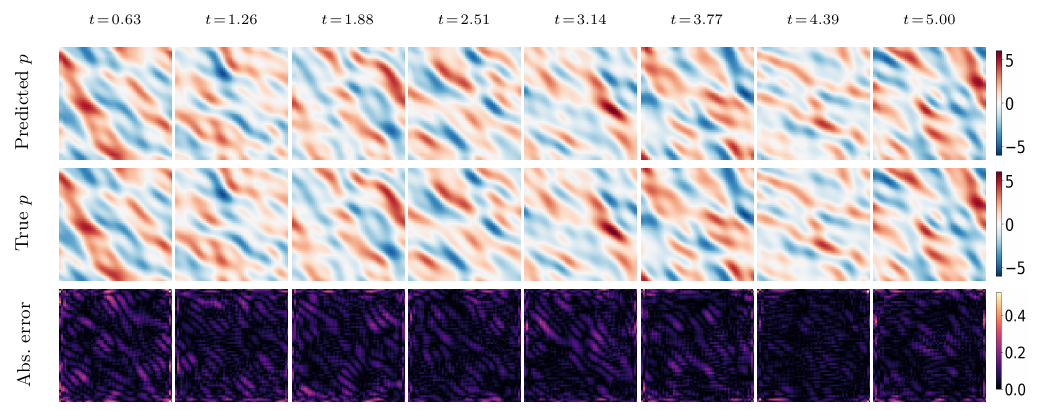}
    \caption{Wave equation trajectory prediction for the velocity field $p$. Predicted fields (row 1), true fields (row 2), and absolute errors (row 3) are shown at eight time snapshots over $t\in[0,5]$.}
    \label{fig:waveRolloutP}
\end{figure}

\begin{figure}
    \centering
    \includegraphics[width=0.85\linewidth]{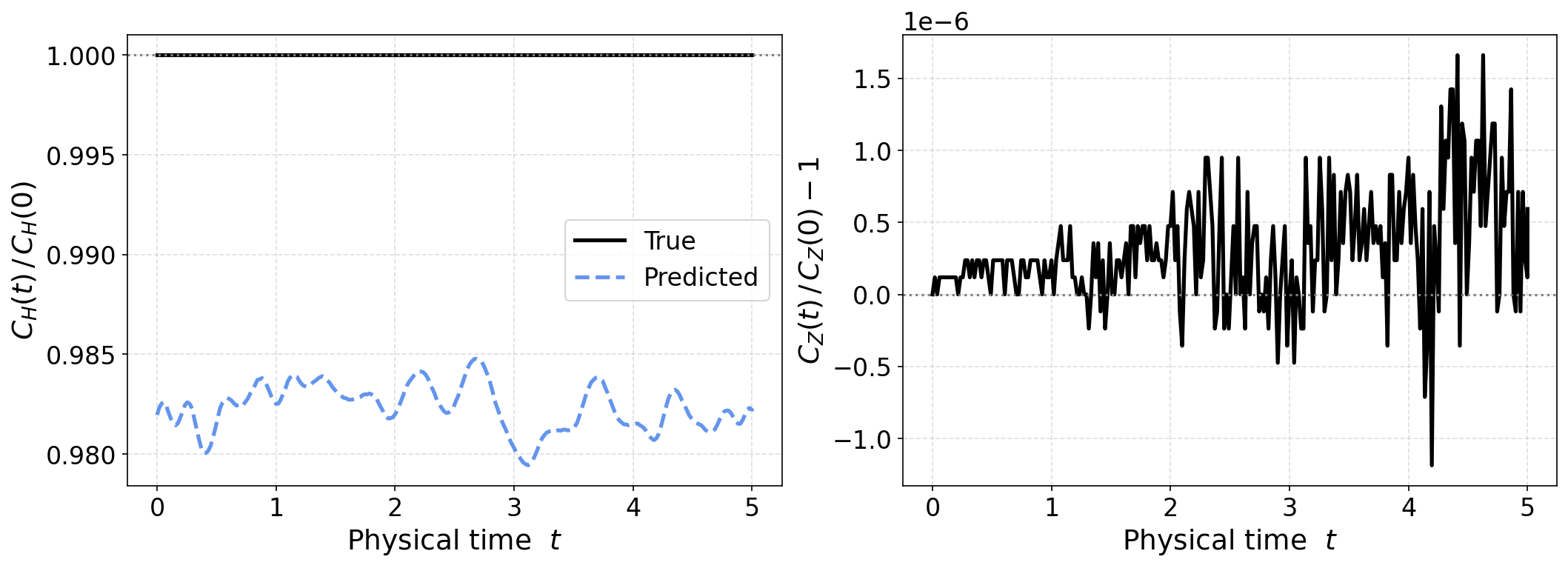}
    \caption{Conservation diagnostics for the wave Latent Twin. Left: normalized Hamiltonian $C_H(t)/C_H(0)$ for the true trajectory and surrogate prediction, with the predicted value deviating by approximately $2\%$ from the reference over the full horizon. Right: relative latent norm variation $C_Z(t)/C_Z(0) - 1$, confirming near-exact conservation of the quadratic latent invariant induced by the skew-symmetric latent operator.}
    \label{fig:waveDiag}
\end{figure}

\section{Conclusions and outlook}\label{sec:conclusion}

We introduced Physics-conforming Latent Twins, a latent solution-operator framework for learning surrogate dynamics that respect selected physical structures by design. The central idea is to use the latent space not only as a compression space, but as the space in which conservation laws, invariants, admissibility conditions, and dissipative structures are represented and enforced. In this view, physical constraints are not merely diagnostics applied after prediction; they are organizing principles that shape the class of admissible latent evolutions and, through decoding, the full surrogate dynamics in the physical state space.

On the theoretical side, we developed a constraint-transfer perspective that relates structural functionals in the physical state space to compatible functionals in latent space. We first showed that standard Latent Twins inherit physical structure a posteriori whenever the surrogate trajectory is close to the exact trajectory. We then proved a sharper structure-preserving estimate showing that, when a compatible latent constraint is enforced directly by the latent flow, the resulting physical defect after decoding is controlled by the compatibility error and the autoencoder reconstruction error, rather than by the full trajectory approximation error. This provides a theoretical justification for enforcing structure at the level of the latent dynamics, and motivates the numerical experiments in which conservative, dissipative, and invariant-preserving structures are imposed directly on the latent flow.

The experiments demonstrate the proposed framework on representative ODE and PDE benchmarks, but the formulation is not restricted to these settings. Physics-conforming Latent Twins apply whenever time-indexed or sequential states can be encoded, propagated in latent space, and decoded back to a meaningful state representation, and the structural functional can be adapted to the application at hand. More generally, the framework is not limited to homogeneous conservation laws: source terms, forcing, reactions, or boundary exchange can be incorporated as balance-law contributions, for instance by representing the corresponding contribution through the decoder-induced pullback and enforcing the resulting relation in latent space. This includes classical dynamical systems, discretized PDEs, iterative optimization processes, data-assimilation pipelines, and more general operator-learning or surrogate-modeling tasks. The main practical role of physics-conforming latent dynamics is to organize the learned representation so that prediction accuracy and structural fidelity are pursued jointly. This addresses a central tension in scientific machine learning: reliable surrogates benefit from physical structure, but the useful form of that structure may only become apparent after choosing a representation in which the dynamics are simple enough to model. Physics-conforming Latent Twins approach this problem by learning the representation, the latent evolution, and the structural constraint together.

Several limitations remain. First, compatibility between a prescribed latent constraint $\mathcal C_Z$ and the decoder-induced pullback of a physical constraint $\mathcal C$ is not automatic. It depends on the learned encoder--decoder pair and can be difficult to guarantee for highly nonlinear decoders. Second, hard architectural constraints provide strong preservation guarantees, but may reduce expressiveness if the chosen latent constraint class is not well matched to the physical structure after decoding. Third, the present theory primarily addresses global structural functionals, such as total mass, energy, or Lyapunov-type quantities, and does not yet guarantee local conservation laws or pointwise admissibility after reconstruction. Finally, the numerical experiments are intended as controlled demonstrations of the proposed design principles rather than exhaustive benchmarks across all possible surrogate-modeling settings.

These limitations suggest several natural extensions. A first direction is to learn or adapt the latent constraint itself, rather than prescribing it from a fixed tractable class, so that $\mathcal C_Z$ better approximates the decoder-induced pullback of the physical quantity while remaining enforceable by the latent flow. A second direction is to extend the framework to parameterized operator-learning settings, where the latent solution operator depends on physical parameters, geometries, forcing terms, boundary conditions, or discretization levels. A third direction is to move beyond global invariants toward local conservation laws and spatially resolved balance relations, which are central in many continuum systems.

More broadly, the framework points toward scientific foundation models built around reusable latent physics. Physics-conforming Latent Twins can serve as a cornerstone for such models by learning latent evolution modules that encode transferable principles, such as conservation, dissipation, stability, and admissibility. This makes the latent space more than a compression space: it becomes a structured space for prediction, inference, and control across systems, parameters, geometries, and application domains.

\medskip
\noindent {\bf Acknowledgments:} The authors thank Akil Narayan and Anthony Gruber for helpful discussions on pullbacks to physical laws.

\printcredits

\bibliographystyle{cas-model2-names}

\bibliography{references}



\end{document}